\renewcommand{\phi}{\varphi}
\renewcommand{\epsilon}{\varepsilon}
\newcommand{\M}{\mathcal{M}}
\newcommand{\F}{\mathcal{F}}
\newcommand{\N}{\mathcal{N}}
\newcommand{\lra}{\leftrightarrow}
\newcommand{\K}{\ensuremath{\mathsf{K}}}
\newcommand{\hK}{\ensuremath{\widehat{\K}}}
\newcommand{\ELAS}{\ensuremath{\mathbf{ELAS}}}
\newcommand{\EL}{\ensuremath{\mathbf{EL}}}
\newcommand{\SELAS}{\ensuremath{\mathsf{SELAS}}}
\newcommand{\AxEX}{\ensuremath{\mathsf{EX}}}
\newcommand{\lr}[1]{\langle #1 \rangle}
\newcommand{\Nm}{\textbf{N}}
\newcommand{\Ps}{\textbf{P}}
\newcommand{\Var}{\textbf{X}}
\newcommand{\TAUT}{\ensuremath{\mathtt{TAUT}}}
\newcommand{\NECK}{\ensuremath{\mathtt{NECK}}}
\newcommand{\DISTK}{\ensuremath{\mathtt{DISTK}}}
\newcommand{\AxTrK}{\ensuremath{\mathtt{Tx}}}
\newcommand{\AxTrx}{\ensuremath{\mathtt{T}}}
\newcommand{\AxTransK}{\ensuremath{\mathtt{4x}}}
\newcommand{\AxEucK}{\ensuremath{\mathtt{5x}}}
\newcommand{\MP}{\ensuremath{\mathtt{MP}}}
 \newcommand{\AxEAS}{\ensuremath{\mathtt{EAS}}}
  \newcommand{\CNECAS}{\ensuremath{\mathtt{CNECAS}}}
\newcommand{\AxId}{\ensuremath{\mathtt{ID}}}
\newcommand{\AxSym}{\ensuremath{\mathtt{SYM}}}
\newcommand{\AxTranseq}{\ensuremath{\mathtt{TRANS}}}
\newcommand{\AxKT}{\ensuremath{\mathtt{KT}}}
\renewcommand{\iff}{\Leftrightarrow}
\renewcommand{\vec}{\overline}
\newcommand{\AxEXEAS}{\texttt{DAS}}
\newcommand{\AxDETAS}{\texttt{DETAS}}
\newcommand{\AxKAS}{\texttt{KAS}}
\newcommand{\AxEFAS}{\texttt{EFAS}}
\newcommand{\AxSUBAS}{\texttt{SUBAS}}
\newcommand{\AxSUBP}{\texttt{SUBP}}
\newcommand{\AxSUBK}{\texttt{SUBK}}
\newcommand{\AxRGDP}{\texttt{RIGIDP}}
\newcommand{\AxRGDN}{\texttt{RIGIDN}}
\newcommand{\AxSUBASEQ}{\ensuremath{\mathtt{SUBASEQ}}}
\newcommand{\AxDBASEQ}{\ensuremath{\mathtt{DBASEQ}}}
\newcommand{\NECAS}{\ensuremath{\mathtt{NECAS}}}
\newcommand{\AxSUBtoAS}{\texttt{SUB2AS}}
\newcommand{\Tr}{\textsf{Tr}}
\newcommand{\Fv}{\textsf{Fv}}
\newcommand{\Va}{\textsf{Var}}
\begin{document}

\begin{frontmatter}
  \title{When Names Are Not Commonly Known: Epistemic Logic with Assignments}
 \author{Yanjing Wang}
  %\footnote{Yanjing Wang is the correspondence author.}
  \address{Department of Philosophy, Peking University }
     \author{Jeremy Seligman}
  \address{Department of Philosophy, University of Auckland}

  \begin{abstract}
In standard epistemic logic, agent names are usually assumed to be common knowledge implicitly. This is unreasonable for various applications. Inspired by term modal logic and assignment operators in dynamic logic, we introduce a lightweight modal predicate logic where names can be non-rigid. The language can handle various \textit{de dicto} \slash \textit{de re} distinctions in a natural way. The main technical result is a complete axiomatisation of this logic over S5 models.  
\end{abstract}

  \begin{keyword}
term modal logic, axiomatisation, non-rigid constants, dynamic logic  \end{keyword}

 \end{frontmatter}

\section{Introduction}

One dark and stormy night, Adam was attacked and killed. His assailant, Bob, ran away, but was seen by a passer-by, Charles, who witnessed the crime from start to finish. This led quickly to Bob's arrest. Local news picked up the story, and that is how Dave heard it the next day, over breakfast. Now, in one sense we can say that both Charles and Dave know that Bob killed Adam. But there is a difference in what they know about just this fact. Although Charles witnessed the crime, and was able to identify the murderer and victim to the police, he might have no idea about their names. If asked ``Did Bob kill Adam?" he may not know. Yet this is a question that Dave could easily answer, despite not knowing who Adam and Bob are---he is very unlikely to be able to identify them in a line-up. 

The distinction between these \textit{de re} and \textit{de dicto} readings of ``knowing Bob killed Adam'' is hard to make in standard epistemic logic, where it is implicitly assumed that the names of agents are rigid designators and thus that it's common knowledge to whom they refer. But in many cases, the distinction is central to our understanding. On the internet, for example, users of websites and other online applications typically have multiple identities, and may even be anonymous. Names are rarely a matter of common knowledge and distinctions as to who knows who is whom are of great interest.  

Further complexities arise with higher-order knowledge and belief. In \cite{grove1995naming}, Grove gives an interesting example of a robot with a mechanical problem calling out for help (perhaps in a Matrix-like future with robots ruling the world unaided by humans). To plan further actions, the broken robot, called $a$, needs to know if its request has been heard by the maintenance robot, called $b$. But how to state exactly what  $a$ \textit{needs to know}? In English we would probably write it as: 
\begin{enumerate}
\item[($\star$)] $a$ knows that $b$ knows that $a$ needs help. 
\end{enumerate}
A naive formulation in standard (predicate) epistemic logic is $\K_a\K_b H(a)$. But without the assumption that the robots' names are both commonly known, there are various ambiguities. For example, if $b$ does not know which robot is named `$a$' then neither does $b$ know whom to help nor has $a$ any confidence of being helped. 
On the other hand, $a$ may not know that `$b$' is the name of the maintenance robot, thus merely knowing $b$ knows $a$ needs help is not enough for $a$ to be sure it will be helped.
%Even if $b$ knows $a$'s name, it is possible that $a$ does not. In which case, $b$ may know whom to help without $a$ knowing it will be helped by $b$. 
The authors of \cite{corsi2013free} list several possible readings of ($\star$), which we will elaborate as follows: $a$, the broken robot, knows that
\begin{enumerate}[(i)]
\item the robot named `$b$' knows that the robot named `$a$' needs help, or 
\item the robot named `$b$' knows that it, i.e. the broken robot, needs help, or
\item the maintenance robot knows that the robot named `$a$' needs help, or 
\item the maintenance robot knows that it, i.e. the broken robot, needs help.
\end{enumerate}
It is impossible to distinguish the above readings in standard epistemic logic. In the literature \cite{grove1993naming,grove1995naming,corsi2013free,Fitting98,HollidayP2014}, various approaches are proposed.  In \cite{grove1995naming}, Grove correctly pinpoints the problems of \textit{scope} and \textit{manner of reference} in giving various \textit{de re} \slash \textit{de dicto} readings for higher-order knowledge, and proposes a new semantics for 2-sorted first-order modal logic that is based on world-agent pairs, so as to cope with indexicals like ``\texttt{me}''. A special predicate symbol `\texttt{In}' is introduced to capture scope explicitly: $\texttt{In}(a, b, n)$ holds at a world $w$ iff $b$ is someone named $n$ by $a$ in $w$. In \cite{Fitting98,thalmann2000term}, an intensional first-order modal logic uses predicate abstraction to capture different readings. $(\lambda x.\K_bHx)(a)$ says that agent $b$ knows \emph{de re} that $a$ is in need of help, whether or not $b$ knows that agent is named `$a$', whereas $\K_bH(a)$ says that $b$ knows \emph{de dicto} that someone called `$a$' needs help, whether or not $b$ knows who $a$ is. The authors of \cite{corsi2013free} propose a very general framework with complex operators based on counterpart semantics.\footnote{The counterpart semantics helps to handle the situation in which one agent is mistakenly considered as two people, as illustrated in \cite{corsi2013free} by the story of the double agent in Julian Symon’s novel \textit{The Man who killed himself.}} Without going into details, the formula $|t: \genfrac{}{}{0pt}{}{t_1\dots t_n}{x_1\dots x_n}|\phi(x_1\dots x_n)$ means, roughly, that the agent named by term $t$ knows \emph{de re} that $\phi$ of the things denoted by terms $t_1\dots t_n$. Holliday and Perry also bring the alethic modality into the picture together with the doxastic modality, and highlights the use of \textit{roles} to capture subtle readings in \cite{HollidayP2014}, where the multi-agent cases are handled by perspective switching based on a single-agent framework. 

In this paper, we follow the \textit{dynamic term modal logic} approach proposed by Kooi \cite{kooi2007dynamic}, based on  \textit{term modal logic} proposed in \cite{fitting2001term}. Term modal logic uses terms to index modalities which can also be quantified, so that $\K_{f(a)}\neg \forall x\K_x \phi$ says that $a$'s father knows that not everyone knows $\phi$. The accessibility relation used in the semantics of $\K_t$, where $t$ is a term, is then relative to the world $w$ at which this formula is evaluated: it is the one labeled by the agent denoted by term $t$ in $w$. Based on this, Kooi  \cite{kooi2007dynamic} borrows dynamic assignment modalities from (first-order) dynamic logic so as to adjust the denotation of names, now assumed to be non-rigid in general, in contrast to the usual \textit{constants} of first-order modal logic which are assumed rigid. 

Full first-order term modal language is clearly undecidable. In \cite{Padmanabha2018}, it is shown that even its propositional fragment is undecidable,\footnote{Only the monodic fragment is decidable \cite{Padmanabha2018}.} and the addition of the program modalities in dynamic logic makes things worse. As Kooi remarks in \cite{kooi2007dynamic}, the combination of term modal logic and dynamic assignment logic is not even recursively enumerable. A closely related study is the doctoral thesis of Thalmann \cite{thalmann2000term}, which provides many results including sequent calculi and tableaux systems for both term modal logic (with quantifiers) and quantifier-free dynamic assignment logic (with regular program constructions). But the two logics are studied \textit{separately}, leaving their combination as future work.\footnote{Thalmann predicts in his conclusion that ``\textit{Using the scoping operator instead of the quantifiers in term-modal logic, should lead to many interesting decidable fragments of term-modal logic.}''}  It is shown that the quantifier-free part of dynamic assignment logic is undecidable with both (Kleene) star operator and (rigid) function symbols but it is decidable if there is no star operator.\footnote{The later is only stated without a proof.} 
% JS: added the next line - bibtex entry below
A rich treatment of various issues of `semantic competence' with names that uses term modal logic is given by Rendsvig in \cite{rendsvig2012}.
%@proceedings{rendsvig2010,
%title={Modeling semantic competence: A critical review of Frege’s puzzle about identity},
%author={Rendsvig, R},
%year ={2012},
%booktitle={New Directions in Logic, Language and Computation, ESSLLI 2010, ESSLLI 2011},
%editor="Lassiter D. and Slavkovik M.",
%year="2012",
%publisher="Springer",
%address={Berlin/Heidelberg},
%series = {LNCS},
%volume = {7415},
%pages="140--157"
%}

In this paper, we take a minimalist approach, introducing only the basic assignment modalities from dynamic logic combined with a quantifier-free term modal logic,  without function symbols, to obtain a small fragment of the logic in \cite{kooi2007dynamic}, which we conjecture to be decidable over S5 models (see discussions at the end of the paper). However, as we will soon see, it is already a very powerful tool for expressing various \textit{de re\slash de dicto} distinctions, as well as a kind of \textit{knowing who}, which was discussed by Hintikka \cite{Hintikka:kab} at the very inception of epistemic logic.\footnote{See \cite{WangBKT} for a summary of related works on knowing-wh.} The language is very simple and intuitive to use as a genuine multi-agent epistemic logic that does not presuppose common knowledge of names.

%\noteYW{We may emphasise that we focus on the knowing the identity interpretation of knowing who. For other interpretations, cf. \cite{Aloni01}.}

\medskip

Before the formal details, let us first illustrate the ideas. As in predicate epistemic logic more generally, the formula $\K_aPb$ says that $a$ knows \emph{de dicto} that $b$ is $P$, whereas  $\K_aPx$ says that $a$ knows \emph{de re} of $x$ that it is $P$.  The formula $[x:=b]Px$ says of $b$ that it is $P$, which is equivalent to $Pb$, but combining operators we get $[x:=b]\K_aPx$, which says that $a$ knows \emph{de re} of $b$ that it is $P$. More precisely, our semantics is based on first-order Kripke models with a constant domain of agents (not names) with formulas evaluated with respect to both a world $w$ and a variable assignment function $\sigma$.  Formula $[x:=t]\phi$ is then true iff $\phi$ is true at $w$ when we change $\sigma$ so that it assigns to $x$ the agent named by $t$ in $w$, and $\K_t\phi$ is true iff $\phi$ is true at all worlds indistinguishable from $w$ by the agent named $t$ in $w$. (This is in line with the \textit{innermost-scope} semantics of \cite{grove1995naming}.)
% Given the set of non-rigid names (and descriptions) $\Ag$, we consider the following language: 
% $$ t::=  x \mid a 
% \vspace{-5pt}$$
% $$ \phi::= (t\approx t) \mid P\vec{t} \mid  \neg\phi \mid (\phi\land\phi)\mid \K_t \phi \mid [x:=t]\phi $$
% where $a\in \Ag$, and $\K_t\phi$ says the agent referred by $t$ knows that $\phi$. $[x:=a]$ is simply a dynamic operator which changes the value of variable $x$ to the current value of $t$, as in \textit{dynamic logic}. Note that the names in $\Ag$ are non-rigid. 

% The semantics is based on first-order Kripke models with a constant domain of agents (not names), and formula $\K_t\phi$ is true on a pointed model $\M,w$ given the assignment $\sigma$ for free variables iff $\phi$ holds on all the worlds which are indistinguishable for the agent denoted by $t$ on $w$, which is in the line of the \textit{innermost-scope} semantics discussed in \cite{grove1995naming}. The assignment operator such as $[x:=c]$ which simply assigns the current value of $c$ to $x$ semantically. In other words, we can store the current value of $c$ by $x$ which can be used later\footnote{Yes, the variables are functioning as a rigid designators.} 

Returning to Grove's poor broken robot $a$, the various readings of `$a$ knows that $b$ knows that $a$ needs help' can be expressed as follows: 
\begin{enumerate}[(i)]
\item $\K_a\K_bH(a)$, $a$ knows that the robot named `$b$' knows that the robot named `$a$' needs help,
\item $[x:=a]\K_a \K_b H(x)$, $a$ knows that the robot named `$b$' knows it (the broken robot $a$) needs help,
\item $[y:=b]\K_a \K_y H(a)$, $a$ knows that it (the maintenance robot $b$) knows the robot named `$a$' needs help,
\item $[x:=a][y:=b]\K_a \K_y H(x)$, $a$ knows that it (the maintenance robot $b$) knows that it (the broken robot $a$) needs help.
\end{enumerate}
Moreover, since names are non-rigid, we can express \emph{$a$ knowing who $b$ is} by $[x:=b]\K_a (x\approx b)$ which says that $a$ identifies the right person with name $b$ on all relevant possible worlds. This we abbreviate as $\K_a b$.\footnote{There are a lot of different readings of \textit{knowing who}. E.g., knowing who went to the party may be formalized as $\forall x (\K W(x)\lor \K \neg W(x))$ under an exhaustive interpretation \cite{Wang17d}. See \cite{Aloni01,Aloni2018} for a very powerful treatment using \textit{conceptual covers} to give different interpretations. \cite{Rendsvigmaster} also contains related discussions.} Thus we are able to express the following:
\begin{enumerate}[(i)] \setcounter{enumi}{4}
%\item $\K_a b$: $a$ knows who $b$ is. 
\item $\neg \K_a a$: $a$ does not know he is called $a$ (c.f., ``the most foolish person may not know that he is the most foolish person" in \cite{kooi2007dynamic}). 
\item $b\approx c \land \K_ab \land \neg \K_ac$: $a$ knows who $b$ is but does not know who $c$ is, although they are just two names of the same person.  
\item $[x:=b][y:=a](\K_c M(x, y)\land\neg \K_c(a\approx x\land y\approx b))$: Charles knows who killed whom that night but does not know the names of the murderer and the victim.
\item $\K_dM(b, a)\land \neg \K_d a\land \neg \K_d b$: Dave knows that a person named Bob murdered a person named Adam without knowing who they are. 
%\item $\K_a H(b) \land \neg \K_a b$: $a$ knows that $b$ needs help without knowing who $b$ is.  
%\item We may distinguish acquaintance by the necessary mutual condition: $\K_a b\land \K_a\K_b a.$
\end{enumerate}
%The last two examples are the formulations of the knowledge statements that we mentioned in our opening story. 

\medskip

The innocent look of our logical language belies some technical complexity. The main technical result is a complete axiomatisation of the logic over epistemic (S5) models (Section \ref{sec.ax} and \ref{sec.comp}), requires much work to handle the constant domain without Barcan-like formulas. We conclude with discussions on the issues of decidability of our logic (Section \ref{sec.con}).   

\section{Preliminaries}\label{sec.pre}
In this section we introduce formally the language and semantics of our logic. 
\begin{defn}[Epistemic language with assignments] Given a denumerable set of names $\Nm$, a denumerable set of variables \Var$, and $ a denumerable set $\Ps$ of predicate symbols, the language $\ELAS$ is defined as: 
$$ t::=  x \mid a 
$$
$$ \phi::= (t\approx t) \mid P\vec{t} \mid  \neg\phi \mid (\phi\land\phi)\mid \K_t \phi \mid [x:=t]\phi $$ %\text{ (where $t\not=x$)} $$
\noindent where $a\in \Nm$, $P\in \Ps$, and  $\vec{t}$ is a vector of terms of length equal to the arity of predicate $P$. We write $\hK_t\phi$ as the the abbreviation of $\neg \K_t\neg \phi$ and write $\lr{x:=t}\phi$ as the abbreviation of $\neg [x:=t]\neg \phi$.\footnote{This is for comparison with other modal logics; in fact, the assignment modality is self-dual.} 
%Requiring $t\not=x$ in $[x:=t]\phi$ is just for technical convenience in substitution and later translation to \FOL.\footnote{Requiring $t\not=x$ does not affect the expressivity since given our semantics to be defined below, we can always eliminate $[x:=x]$ since it does not change the valuation of $x$ at all.}  
We call the $[x:=t]$-free fragment $\EL$.
%\noteYW{Allowing $f\vec{t}$ may cause trouble.}
\end{defn}
We define the semantics of $\ELAS$ over first-order Kripke models.  
\begin{defn}
A constant domain Kripke model $\M$ for \ELAS\ is a tuple $\lr{W, I, R, \rho, \eta}$ where: 
\begin{itemize}
\item $W$ is a non-empty set of possible worlds.
\item $I$ is a non-empty set of agents.
\item $R: I \to 2^{W\times W}$ assign a binary relation $R(i)$ (also written $R_i$) between worlds, to each agent $i$.
\item $\rho:\Ps\times W\to \bigcup_{n\in \omega}2^{I^n}$ assigns an $n$-ary relation $\rho(P,w)$ between agents to each $n$-ary predicate $P$ at each world $w$.
\item $\eta:\Nm\times W\to I$ assigns an agent $\eta(n,w)$ to each name $n$ at each world $w$.
\end{itemize}
%But we do need this to express mention-all over S5 I think. Let's interpret $c$ as proper names. }
%\item \noteYW{If we do not have this condition, we can probably still use axioms of $\K_x$ to do the job (Introspections are still OK). But not sure whether we can express mention-all $\KA_a$ with the name $a$. }
%\end{itemize}
We call $\M$ an \emph{epistemic model} if $R_i$ is an equivalence relation for each $i\in I$. 
\end{defn}

Note that the interpretations of predicates and names are not required to be rigid, and there may be worlds in which an agent has no name or multiple names. To interpret free variables, we need a variable assignment $\sigma: \Var\to I$. Formulas are interpreted on pointed models $\M,w$ with variable assignments $\sigma$. Given an assignment $\sigma$ and a world $w\in W$, let $\sigma_w(a)=\eta(a,w)$ and $\sigma_w(x)=\sigma(x)$. So although names may not be rigid, variables are.

%$$\begin{array}{|rcl|}
%\hline
%\M, w, \sigma\vDash \KS \phi &\Leftrightarrow& \text{there exists an $i\in I$ such that } \M, w, \sigma[x\mapsto i]\vDash\K\phi\\
%\M, w, \sigma\vDash \hKS \phi &\Leftrightarrow& \text{for all $i\in I$, }\M, w, \sigma[x\mapsto i]\vDash\Diamond\phi\\
%\hline
%\end{array}
%$$
The truth conditions are given w.r.t.\ pointed Kripke models with assignments $\M,w,\sigma$.
\begin{defn}
$$\begin{array}{|rcl|}
\hline
\M, w, \sigma\vDash t\approx t' &\Leftrightarrow & \sigma_w(t)=\sigma_w(t') \\ 
%\M, w, \sigma\vDash x\approx y &\Leftrightarrow & \sigma(x)=\sigma(y) \\ 
\M, w, \sigma\vDash P(t_1\cdots t_n) &\Leftrightarrow & (\sigma_w(t_1), \cdots, \sigma_w(t_n))\in \rho(P,w)  \\ 
\M, w, \sigma\vDash \neg\phi &\Leftrightarrow&   \M, w, \sigma\nvDash \phi \\ 
\M, w, \sigma\vDash (\phi\land \psi) &\Leftrightarrow&  \M, w, \sigma\vDash \phi \text{ and } \M, w, \sigma\vDash \psi \\ 
\M, w, \sigma\vDash \K_t \phi &\Leftrightarrow& \M, v, \sigma \vDash\phi \text{ for all $v$ s.t.\ $wR_{\sigma_w(t)}v$}\\
\M, w, \sigma\vDash [x:=t]\phi &\Leftrightarrow& \M, w, \sigma[x\mapsto \sigma_w(t)]\vDash \phi\\
\hline 
\end{array}$$
An $\ELAS$ formula is \textit{valid} (over epistemic models) if it holds on all the (epistemic) models with assignments $\M, s,\sigma$. 
\end{defn}

We can translate \label{standard-translation}%JS: added label
 $\ELAS$ into the corresponding (2-sorted) first-order language with not only the equality symbol but also a ternary relation symbol $R$ for the accessibility relation, a function symbol $f^a$ for each name $a$, and an $n+1$-ary relation symbol $Q^P$ for each predicate symbol $P$. The non-trivial clauses are for $\K_t$ and $[x:=t]\phi$ based on translation for terms: 
$$\begin{array}{l}
\Tr_w(x)=x \qquad \Tr_w(a)=f^a(w)\\
\Tr_w(t\approx t')= \Tr_w(t)\approx \Tr_w(t') \qquad \Tr_w(P\vec{t})= Q^P(w, \Tr_w(\vec{t})) \\
\Tr_w(\neg \psi)=\neg \Tr_w(\psi) \qquad \Tr_w(\phi\land\psi)=\Tr_w(\phi)\land \Tr_w(\psi).\\
\Tr_w(\K_t\psi)=\forall v (R(w,v,\Tr_w(t)) \to \Tr_{v}(\psi)) \\
\Tr_w([x:=t]\psi)=\left\{\begin{array}{l@{ \text{ if } }l}
\exists x (x\approx \Tr_w(t) \land \Tr_w(\psi))& t\not=x\\
\Tr_w(\psi)&t=x
\end{array}\right.
\end{array}
$$
Note that when $t\not= x$ we can also (equivalently) define $\Tr_w([x:=t]\psi)= 
\forall x (x\approx \Tr_w(t) \to \Tr_w(\psi))$, since there is one and only one value of $\Tr_w(t)$. When $x=t$, then $[x:=x]\psi$ is equivalent to $\psi$ according to the semantics.

In the light of this translation, we can define the free and bound occurrences of a variable in an $\ELAS$-formula by viewing $[x:=t]$  in $[x:=t]\phi$ as a quantifier for $x$ binding $\phi$. Note that the $t$ in $[x:=t]$ is not bound in $[x:=t]\phi$, even when $t=x$. The set of free variables $\Fv(\phi)$ in $\phi$ is defined as follows (where $\Va(\vec{t})$ is the set of variables in the terms $\vec{t}$):% ($\emptyset$ or a singleton in this case):  
\begin{center}
\begin{tabular}{ll}
     $\Fv(P\vec{t})=\Va(\vec{t})$&\\
   $\Fv(\neg \phi)=\Fv(\phi)$ & $\Fv(\phi\land \psi)=\Fv(\phi)\cup \Fv(\psi)$ \\
    $\Fv(\K_t \phi)=\Va(t)\cup \Fv(\phi)$ & $\Fv([x:=t]\phi)=(\Fv(\phi)\setminus \{x\})\cup \Va(t) $ \\
\end{tabular}
\end{center}
%Note that according to the above definition, $x$ in $[x:=x]\phi$ is not free but $[x:=x]\phi$ is equivalent to $\phi$.
We use $\phi[y\slash x]$ to denote the result of substituting $y$ for all the free occurrences of $x$ in $\phi$.  We say $\phi[y\slash x]$ is \textit{admissible} if all the occurrences of $y$ by replacing free occurrences of $x$ in $\phi$ are also free.

% with two sorts (world and object). In the first order language, for each constant $c$ we have a function symbol $f_c$, and for each n-ary predicate symbol $P$ we have a $n+1$-ary predicate symbol $Q_P$, and we have a ternary relation symbol $R$ 

% $$\begin{array}{l}
% \Tr_w(x)=x \qquad \Tr_w(c)=f_c(w)\\
% \Tr_w(t\approx t')= \Tr_w(t)\approx \Tr_w(t') \qquad \Tr_w(P\vec{t})= Q_P(w, \vec{t}) \\
% \Tr_w(\neg \psi)=\neg \Tr_w(\psi) \qquad \Tr_w(\phi\land\psi)=\Tr_w(\phi)\land \Tr_w(\psi).\\
% \Tr_w(\K_t\psi)=\forall v (R(w,v,\Tr_w(t)) \to \Tr_{v}(\psi)) \\
% \Tr_w([x:=c]\psi)=\exists x (x\approx f_c(w) \land \Tr_w(\psi))=\Tr_w(\psi)[f_c(w)\slash x] \\
% \end{array}
% $$

We first show that \ELAS\ is indeed more expressive than \EL.
\begin{proposition}
The assignment operator $[x:=t]$ cannot be eliminated over (epistemic) models with variable assignments. 
\end{proposition}
\begin{proof}
Consider the following two (epistemic) models (reflexive arrows omitted) with a fixed domain $I=\{i, j\}$, worlds $W=\{s_1,s_2\}$ and a fixed assignment $\sigma(x)=i$ for all $x\in \Var$: 
\[
\begin{small}
\xymatrix{
\txt{$\M_1$:\\$\eta(a,s_1)=j$\\$\rho(P,s_1)=\emptyset$\\$s_1$ }\ar@{-}[d]^j & \txt{$\M_2$:\\ $\eta(a,s_1)=j$\\$\rho(P,s_1)=\emptyset$\\$s_1$}\ar@{-}[d]^j\\
\txt{$s_2$ \\$\rho(P,s_2)=\{i,j\}$\\$\eta(a,s_2)=i$} &\txt{$s_2$\\ $\rho(P,s_2)=\{i\}$\\$\eta(a,s_2)=i$} 
}
\end{small}
\]

$[x:=a]\hK_aPx$ can distinguish $\M_1, s_1$ and $\M_2, s_1$ given $\sigma$. But the only  atomic formulas other than identities are $Px$, $Pa$ and $a\approx x$, which are all false at $s_1$ and all true at $s_2$ in both models. Also note that $\K_a$ and $\K_x$ have exactly the same interpretation on the corresponding worlds in the two models. Based on these observations, a simple inductive proof on the structure of formulas would show that \EL\ cannot  distinguish the two models given $\sigma$. 
%\noteYW{We can make this more precise by having a bisimulation notion defined.}
%
%However, it is not hard to prove that no $\EL$ formula can do so since there is no way to talk about $j$ at world $s_2$ without changing the assignment. 
%
%On the other hand, if $\sigma$ is not fixed then we can distinguish the above models, e.g., $\M_1, s_1 \sigma[x\mapsto j]\vDash \hK_aPx$ and $\M_2, s_3, \sigma[x\mapsto j]\nvDash \hK_aPx$ but changing the assignment is the point of our language. 
\end{proof}

Interested readers may also wonder whether we can eliminate $[x:=t]$ in each \ELAS\ formula to obtain an \EL\ formulas which is \textit{equally satisfiable}. However, the naive idea of translating $[x:=t]\phi$ into $z\approx t\land \phi[z\slash x]$ with fresh $z$ will not work in formulas like $\K_a[x:=c]\hK_b x\not\approx c$ since the name $c$ is not rigid. 

To better understand the semantics, the reader is invited to examine the following valid and invalid formulas over epistemic models:
\begin{center}
\begin{tabular}{|lrl|}
\hline
1 &valid & $x\approx y\to \K_t x\approx y$, \quad $ x\not \approx y \to \K_t  x\not\approx y$\\  
&invalid & $x\approx a\to \K_t x\approx a$, \quad $x\not \approx a \to \K_t x\not\approx a$, \quad $a\approx b\to \K_t a\approx b$ \\ 
2&valid & $\K_x\phi\to \K_x\K_x\phi$, \quad $\neg\K_x\phi \to \K_x\neg \K_x\phi$, \quad $\K_t\phi\to \phi$. \\
&invalid& $\K_t\phi\to \K_t\K_t\phi$, \quad $\neg\K_t\phi \to \K_t\neg \K_t\phi$ \\
3&valid &$[x:=y]\phi \to \phi [y\slash x]$ ($\phi [y\slash x]$ is admissible)\\ 
&invalid & $[x:=a] \K_tPx \to \K_tPa$ \\ 
4&valid &$x\approx a \to (\K_x \phi \to \K_a\phi)$, \quad $a\approx b\to (Pa\to Pb)$ \\ 
&invalid &$x\approx a \to (\K_b Px \to \K_a Pa)$, \quad $a\approx b\to (\K_cPa\to \K_cPb)$ \\ 
5&valid & $[x:=y]\K_a \phi \to  \K_a [x:=y] \phi $\\
&invalid& $[x:=b]\K_a Px \to  \K_a [x:=b] Px$\\ 
\hline
\end{tabular}
\end{center}

\begin{remark} Here are some brief explanations: 
\begin{itemize}
\item[1:] It shows the distinction between (rigid) variables and (non-rigid) names. The invalid formula shows that although two names co-refer, you may not know it (recall Frege's puzzle). 
\item[2:] Axioms \texttt{4} and \texttt{5} do not work for names in general, since $a$ may not know that he is named `$a$'. On the other hand, positive and negative introspection hold when the index is a variable. The $\mathsf{T}$ axiom works in general. 
\item[3:] It also demonstrates the non-rigidity of names. $[x:=a]\K_b Px$ does not imply $\K_bPa$ since $b$ may consider a world possible where $Pa$ does not hold since $a$ on that world does not refer to the actual person named by $a$ in the real world.
\item[4:] This shows that it is fine to do the first-level substitutions for the equal names but not in the scope of other modalities.
\item[5:] The last pair also demonstrates the distinction between rigid variables and non-rigid names. In particular, the analog of \textit{Barcan formula} $[x:=t]\K_s\phi \to \K_s[x:=t]\phi$ is not in general valid, if $t$ is a name. 
\end{itemize}
\end{remark}

\section{Axiomatisation} \label{sec.ax}
In this section we give a complete axiomatisation of valid \ELAS-formulas over epistemic models. The axioms and rules can be categorised into several classes:
\begin{itemize}
\item For normal propositional modal logic: $\TAUT, \DISTK, \MP, \NECK$; 
\item Axiom for epistemic conditions: $\AxTrK, \AxTransK, \AxEucK$;%\footnote{Note that we can actually replace $\AxTrK$ by $\K_x\phi\to\phi$ and derive $\AxTrK$ from the rest.};
\item Axioms for equality and first-level substitutability:  $\AxId,$ $\AxSUBP,$ $\AxSUBK,$ $\AxSUBAS$;
\item Axioms capturing rigidity of variables: $\AxRGDP$ and  $\AxRGDN$;
\item Properties of assignment operator: $\AxKAS$ (normality), $\AxDETAS$ (determinacy), $\AxEXEAS$ (executability), and $\AxEFAS$ (the effect of the assignment).
\item Quantifications: $\AxSUBtoAS$ and $\NECAS$, as in the usual first-order setting (viewing assignments as quantifiers). 
\end{itemize}
\begin{center}
\begin{tabular}{ll}
		\multicolumn{2}{c}{System $\SELAS$}\\
		{\textbf{Axioms}}&\\
	\begin{tabular}[t]{lll}
		% \lcline{1-2}\rcline{3-4}
		%\lcline{1-2}\rcline{3-4}
		\TAUT & \text{Propositional tautologies}\\
		\DISTK & $\K_t(\phi\to\psi)\to (\K_t\phi\to \K_t\psi)$\\
		\AxTrK& $\K_x \phi\to \phi $ \\
		\AxTransK& $\K_x \phi\to\K_x\K_x \phi$\\
		\AxEucK& $\neg \K_x \phi\to\K_x\neg\K_x \phi$\\
        \AxId & $t\approx t$\\
         \AxSUBP & $\vec{t}\approx \vec{t'} \to (P\vec{t}\lra P\vec{t'})$ \\ &($P$ can be $\approx$)\\
%         &$\vec{t}\approx \vec{t'} \to (f\vec{t}\approx f\vec{t'})$\\
        \AxSUBK& $t\approx t' \to (\K_t \phi \lra \K_{t'} \phi)$\\
\end{tabular}&
\begin{tabular}[t]{lll}
       \AxSUBAS & $t\approx t' \to $ \\&$([x:=t] \phi \lra [x:=t']\phi)$\\
       \AxRGDP   & $x\approx y \to \K_t x\approx y$\\
   \AxRGDN        & $x\not\approx y \to \K_t  x\not\approx y$\\
   %             & $x\approx t \to (\K_x \phi \lra \K_t \phi[t\slash x])$\\
\AxKAS&$[x:=t](\phi\to \psi)\to$\\& $([x:=t]\phi\to[x:=t]\psi)$ \\
\AxDETAS & $\lr{x:=t}\phi\to [x:=t]\phi$\\
\AxEXEAS & $\lr{x:=t}\top$\\
\AxEFAS  & $ [x:=t] x\approx t$ 
%\noteYW{it does not work if $t=f(x)$, maybe $y\approx x\to[x:=t]x\approx t[y\slash x]$ where $y\not=x$}
\\
\AxSUBtoAS & $\phi[y\slash x]\to [x:=y]\phi$ \\& ($\phi[y\slash x]$ is admissible)   \\
 % & $ [x:=c] \phi \to \phi$ ($x\not\in \Fv(\phi)$)\\
 % & $x\approx t\to ([x:=c] \phi \to \phi)$\\
%& $\lr{x:=c}\phi(x)\lra \phi(c)$ if $x$ is not in the scope of any modalities\\
\end{tabular}
\end{tabular}
	\begin{tabular}{lclclc}
\textbf{Rules:}\\
 \MP & $\dfrac{\varphi,\varphi\to\psi}{\psi}$&\NECK& $\dfrac{\vdash\varphi}{\vdash\K_t\varphi}$ & \NECAS&$\dfrac{\vdash\varphi\to \psi}{\vdash \varphi \to [x:=t] \psi} \quad (x\not\in \Fv(\phi))$\\
 
%& \SUB & $\dfrac{\varphi(p)}{\varphi[\psi\slash p]}$
\end{tabular}
\end{center}
\noindent where $\vec{t}\approx \vec{t'}$ means point-wise equivalence for sequences of terms $\vec{t}$ and $\vec{t'}$ such that $|\vec{t}|=|\vec{t'}|$.
% \begin{remark}
% Given $t \approx t'$, the substitution of $t$ by $t'$ only works at the outermost level, e.g., $(a\approx b \land \K_c Pa) \to \K_c Pb$ is not valid. 
% \end{remark}
It is straightforward to verify the soundness of the system. 
\begin{theorem}[Soundness]\label{th.soundness}
\SELAS\ is sound over epistemic models with assignments.  
\end{theorem}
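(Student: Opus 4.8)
The plan is to verify, one axiom and one rule at a time, that each axiom of \SELAS\ is valid over epistemic models with assignments and that each rule preserves validity; soundness then follows by a routine induction on the length of derivations. The propositional axiom \TAUT, the distribution axiom \DISTK, and the rules \MP\ and \NECK\ are discharged exactly as in ordinary normal modal logic, noting only that for a fixed $\M,w,\sigma$ the operator $\K_t$ is simply the box for the single relation $R_{\sigma_w(t)}$. The identity axiom \AxId\ is immediate from $\sigma_w(t)=\sigma_w(t)$, and the three substitution axioms \AxSUBP, \AxSUBK, \AxSUBAS\ all follow from one observation: whenever $\sigma_w(t)=\sigma_w(t')$, the two terms behave identically in every truth clause, giving the same membership in $\rho(P,w)$, the same relation $R_{\sigma_w(t)}=R_{\sigma_w(t')}$, and the same update $\sigma[x\mapsto\sigma_w(t)]=\sigma[x\mapsto\sigma_w(t')]$.

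The epistemic and rigidity axioms hinge on a single fact: variables are rigid, i.e.\ $\sigma_v(x)=\sigma(x)$ at every world $v$, whereas names need not be. For \AxTrK\ I would use reflexivity of $R_{\sigma_w(x)}$; for \AxTransK\ and \AxEucK\ I would combine transitivity/Euclideanness with the observation that, along any $R_{\sigma_w(x)}$-step to $v$, the index $\sigma_v(x)=\sigma_w(x)$ is unchanged, so a single relation is used throughout the nested modality (this is precisely what fails for a name, whose denotation may shift across the step). The rigidity axioms \AxRGDP\ and \AxRGDN\ follow similarly, since $\sigma_v(x)=\sigma(x)$ and $\sigma_v(y)=\sigma(y)$ at every accessible $v$ preserve the (in)equality $\sigma(x)=\sigma(y)$. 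For the assignment operator the key is that it is self-dual: directly from the truth clause, $\M,w,\sigma\vDash\lr{x:=t}\phi$ iff $\M,w,\sigma[x\mapsto\sigma_w(t)]\vDash\phi$ iff $\M,w,\sigma\vDash[x:=t]\phi$, which yields \AxDETAS\ and \AxEXEAS\ at once and makes the normality axiom \AxKAS\ trivial (the update is a single total function). For \AxEFAS\ one checks that after the update $\sigma'=\sigma[x\mapsto\sigma_w(t)]$ we have $\sigma'_w(x)=\sigma_w(t)$; the one mild subtlety is that $\sigma'_w(t)=\sigma_w(t)$ as well, since reassigning $x$ cannot change the denotation of $t$ (either $t$ is a name, interpreted by $\eta$ independently of $\sigma$, or $t$ is a variable distinct from $x$, or $t=x$ and the update is vacuous).

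The main work is the substitution axiom \AxSUBtoAS, which requires a substitution lemma: if $\phi[y\slash x]$ is admissible, then $\M,w,\sigma\vDash\phi[y\slash x]$ iff $\M,w,\sigma[x\mapsto\sigma(y)]\vDash\phi$. I would prove this by induction on $\phi$, the admissibility hypothesis being exactly what prevents variable capture at the $[z:=s]$ clause, where an unrestricted renaming could otherwise turn a free occurrence into a bound one. Granting the lemma, \AxSUBtoAS\ is immediate because $y$ is a variable, so $\sigma_w(y)=\sigma(y)$ and the update appearing in $[x:=y]\phi$ coincides with the one in the lemma. I expect this substitution lemma, stated and discharged cleanly under the admissibility side condition, to be the main obstacle; everything else is bookkeeping. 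Finally, the rule \NECAS\ is sound because the side condition $x\notin\Fv(\phi)$ guarantees, via the standard locality lemma that the truth value of a formula depends only on the assignment values of its free variables, that $\M,w,\sigma\vDash\phi$ iff $\M,w,\sigma[x\mapsto\sigma_w(t)]\vDash\phi$; hence validity of $\phi\to\psi$ propagates through the update to give validity of $\phi\to[x:=t]\psi$.
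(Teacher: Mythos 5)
Your proposal is correct and follows exactly the routine axiom-by-axiom verification that the paper asserts without detail (its proof of Theorem~\ref{th.soundness} is just the remark that soundness is straightforward to check). You correctly isolate the only two points needing real care---the substitution lemma under admissibility for \AxSUBtoAS\ and the agreement (locality) lemma for \NECAS---and your treatment of variable rigidity for \AxTransK, \AxEucK, \AxRGDP, \AxRGDN, and of the self-dual, deterministic update for the assignment axioms (including the observation that reassigning $x$ does not disturb $\sigma_w(t)$ in \AxEFAS) is exactly what a full write-up would contain.
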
 
\begin{proposition}The following are derivable in the above proof system (where $\phi[y\slash x]$ is admissible below in $\AxSUBASEQ$) :
\begin{center}
\begin{tabular}{llll}
 \AxSym & $t\approx t'\to t' \approx t$ & \AxTranseq & $t \approx t'\land t'\approx t'' \to t\approx t''$\\
\AxDBASEQ & $\lr{x:=t}\phi\lra [x:=t]\phi$ & \AxSUBASEQ & $\phi[y\slash x]\lra [x:=y]\phi$ \quad \\
\AxEAS &$[x:=t] \phi \lra \phi$ ($x\not\in \Fv(\phi)$)& \AxTrx &$\K_t\phi\to \phi$ \\
% &$[x:=x]\phi\lra \phi$  & & $x\approx t\to ([x:=t] \phi \to \phi)$\\
% &   & & $x\approx t\to ([x:=t] \phi \to \phi)$\\
\CNECAS &$\dfrac{\vdash\varphi\to \psi}{\vdash [x:=t] \varphi \to  \psi} \quad (x\not\in \Fv(\psi))$& \NECAS' &$\dfrac{\vdash\phi}{\vdash [x:=t] \varphi} $\\
\AxEX & $[x:=x]\phi\lra \phi$ & \\
%$\K_t\phi\to [x:=t]\K_x\phi \quad (x\not\in \Fv(\phi))$\\
\end{tabular}
\end{center}
\end{proposition}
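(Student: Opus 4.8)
The plan is to derive the nine items roughly in order of dependence, bootstrapping from the primitive axioms and rules. I would first dispose of the two equality principles: $\AxSym$ and $\AxTranseq$ follow from $\AxId$ and $\AxSUBP$ taken with $P$ being $\approx$. For symmetry, instantiate $\AxSUBP$ on the term-vectors $(t,t)$ and $(t',t)$ to obtain $(t\approx t'\land t\approx t)\to((t\approx t)\lra(t'\approx t))$; discharging the two copies of $t\approx t$ via $\AxId$ leaves $t\approx t'\to t'\approx t$. Transitivity is entirely analogous, instantiating $\AxSUBP$ on $(t,t')$ and $(t,t'')$. I would also record $\NECAS'$ at once: given $\vdash\phi$ we have $\vdash\top\to\phi$, so $\NECAS$ (with $\varphi=\top$, for which $x\notin\Fv(\top)$ trivially) yields $\vdash\top\to[x:=t]\phi$ and hence $\vdash[x:=t]\phi$.

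The two workhorses are the self-duality $\AxDBASEQ$ and the conditional necessitation $\CNECAS$, and I would prove them next. For $\AxDBASEQ$ the direction $\lr{x:=t}\phi\to[x:=t]\phi$ is exactly $\AxDETAS$, so only $[x:=t]\phi\to\lr{x:=t}\phi$ is at issue. From the tautology $\phi\to(\neg\phi\to\bot)$, apply $\NECAS'$ and distribute twice with $\AxKAS$ to get $\vdash[x:=t]\phi\to([x:=t]\neg\phi\to[x:=t]\bot)$; since $\AxEXEAS$ gives $\vdash\neg[x:=t]\bot$ (rewriting $\neg\top\lra\bot$ under $\NECAS'$ and $\AxKAS$), we conclude $\vdash[x:=t]\phi\to\neg[x:=t]\neg\phi$, i.e.\ $[x:=t]\phi\to\lr{x:=t}\phi$. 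For $\CNECAS$, from $\vdash\varphi\to\psi$ with $x\notin\Fv(\psi)$ take the contrapositive $\neg\psi\to\neg\varphi$ and apply $\NECAS$ (legitimate since $x\notin\Fv(\neg\psi)$) to get $\vdash\neg\psi\to[x:=t]\neg\varphi$; rewriting $[x:=t]\neg\varphi$ as $\neg[x:=t]\varphi$ via $\AxDBASEQ$ and contraposing gives $\vdash[x:=t]\varphi\to\psi$.

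With these in hand the next three are short. $\AxEAS$ splits into $[x:=t]\phi\to\phi$, which is $\CNECAS$ applied to $\phi\to\phi$ (here $x\notin\Fv(\phi)$), and $\phi\to[x:=t]\phi$, which is $\NECAS$ applied to the same tautology. For $\AxSUBASEQ$ one direction is $\AxSUBtoAS$ itself; for the converse, apply $\AxSUBtoAS$ to $\neg\phi$ to get $\neg\phi[y\slash x]\to[x:=y]\neg\phi$, rewrite the consequent as $\neg[x:=y]\phi$ by $\AxDBASEQ$, and contrapose (admissibility of $(\neg\phi)[y\slash x]$ follows from that of $\phi[y\slash x]$, since negation adds no binders). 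Finally $\AxEX$ is just the special case $y=x$ of $\AxSUBASEQ$, since $\phi[x\slash x]=\phi$ and the substitution is trivially admissible.

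The main obstacle is $\AxTrx$, extending the $\mathsf{T}$ axiom from variable indices (where it is the primitive $\AxTrK$) to arbitrary terms $t$, including non-rigid names. The idea is to name the agent denoted by $t$ with a fresh variable $z\notin\Va(t)\cup\Fv(\phi)$ and transfer the known instance $\vdash\K_z\phi\to\phi$ along the assignment $[z:=t]$. Concretely, $\CNECAS$ turns $\vdash\K_z\phi\to\phi$ into $\vdash[z:=t]\K_z\phi\to\phi$. It then remains to show $\K_t\phi\to[z:=t]\K_z\phi$: from $\AxSUBK$ we have $z\approx t\to(\K_t\phi\to\K_z\phi)$, so $\NECAS'$, $\AxKAS$ and $\AxEFAS$ (which supplies $[z:=t]z\approx t$) yield $[z:=t](\K_t\phi\to\K_z\phi)$ and hence $[z:=t]\K_t\phi\to[z:=t]\K_z\phi$; since $z\notin\Fv(\K_t\phi)$, $\AxEAS$ identifies $[z:=t]\K_t\phi$ with $\K_t\phi$, and chaining gives $\K_t\phi\to\phi$. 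The delicate points throughout are the freshness and admissibility side conditions, which is exactly where the failure of the name-indexed analogues (e.g.\ $\K_t\phi\to\K_t\K_t\phi$) shows why the detour through a variable index is necessary.
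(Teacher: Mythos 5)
Your proposal is correct and takes essentially the same route as the paper's own sketch: the same dependency chain ($\AxSym$, $\AxTranseq$ from $\AxId$ and $\AxSUBP$ with $P$ taken as $\approx$; $\AxDBASEQ$ from $\AxDETAS$ and $\AxEXEAS$; $\CNECAS$ by contraposing under self-duality; $\AxEAS$ from $\NECAS$ and $\CNECAS$; $\AxSUBASEQ$ from the contrapositive of $\AxSUBtoAS$ and duality; $\AxEX$ and $\NECAS'$ as special cases), and for $\AxTrx$ the identical detour through a fresh variable $z$ via $\AxSUBK$, $\AxEFAS$, normality of $[z:=t]$, and $\AxEAS$. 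The only immaterial difference is that in $\AxTrx$ you necessitate the $\AxSUBK$ instance outright and strip the prefix $[z:=t]$ from $\K_t\phi$ by $\AxEAS$, whereas the paper keeps $\K_t\phi$ as the antecedent when applying $\NECAS$.
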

\begin{proof}(Sketch)
 $\AxSym$ and $\AxTranseq$ are trivial based on $\AxId$ and $\AxSUBP$. $\AxDBASEQ$ is based on $\AxDETAS$ and $\AxEXEAS$. $\AxSUBASEQ$ is due to the contrapositive of $\AxSUBtoAS$ and $\AxDBASEQ$. $\CNECAS$ is due to $\NECAS$ and $\AxDBASEQ$ for contrapositive. $\AxEAS$ is based on $\NECAS$ and $\CNECAS$ (taking $\psi=\phi$). $\AxEX$ is a special case of $\AxSUBtoAS$ and $\NECAS'$ is a special case of $\NECAS$. As a more detailed example, let us look at the proof (sketch) of $\AxTrx$ (we omit the rountine steps using the normality of $[x:=t]$): 
$$
\begin{array}{lll}
(1) & \vdash\K_t\phi\to (z\approx t\to \K_z\phi) & \text{($\AxSUBK$, $z$ is fresh)} \\
(2) &  \vdash\K_t\phi\to[z:=t] (z\approx t\to \K_z\phi) &\text{($\NECAS (1)$)}\\ 
(3) & \vdash \K_t\phi\to[z:=t] (\K_z\phi) & \text{($\AxEFAS (2)$)}\\ 
(4) & \vdash [z:=t] (\K_z\phi\to \phi) & \text{($\NECAS', \AxTrK$)}\\ 
(5)& \vdash \K_t\phi\to[z:=t]\phi & \text{(normality of [z:=t] and $\MP$)}\\
(6)& \vdash \K_t\phi\to \phi & \text{($\AxEAS, \MP$)}
\end{array}
$$
\end{proof}

%$\AxEDB$ follows from $\AxEAS$. $\NECAS'$ is due to $\NECAS$. $[x:=t] \phi \lra [x:=t'][x:=t]\phi$

Based on the above result, we can reletter the bound variables in any \ELAS\ formula like in first-order logic. 
\begin{proposition}[Relettering]\label{prop.reletter}
Let $z$ be a fresh variable not in $\phi$ and $t$, then $$[x:=t]\phi\lra [z:=t]\phi[z\slash x]$$.
\end{proposition}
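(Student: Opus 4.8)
The plan is to reduce relettering to two ingredients already in hand: the substitution equivalence $\AxSUBASEQ$ and a \emph{decomposition} of a single assignment into two nested assignments routed through a fresh variable. The target $[x:=t]\phi\lra[z:=t]\phi[z\slash x]$ then follows by chaining these, using congruence (replacement of equivalents) under the normal modality $[z:=t]$, which is available from normality $\AxKAS$ together with $\NECAS'$.

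First I would establish the decomposition lemma $[x:=t]\phi\lra[z:=t][x:=z]\phi$ for $z$ fresh. The intuition is that once $[z:=t]$ has been executed, the effect axiom $\AxEFAS$ guarantees $z\approx t$, so inside the scope of $[z:=t]$ the assignments $[x:=z]$ and $[x:=t]$ must agree by $\AxSUBAS$. Concretely, $z\approx t\to([x:=z]\phi\lra[x:=t]\phi)$ is an instance of $\AxSUBAS$; necessitating it with $\NECAS'$ and combining with $[z:=t]\,z\approx t$ (an instance of $\AxEFAS$) via $\AxKAS$ yields $[z:=t]([x:=z]\phi\lra[x:=t]\phi)$, hence $[z:=t][x:=z]\phi\lra[z:=t][x:=t]\phi$ by congruence under $[z:=t]$. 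Since $z$ is fresh it does not occur free in $[x:=t]\phi$ (recall $\Fv([x:=t]\phi)=(\Fv(\phi)\setminus\{x\})\cup\Va(t)$, while $z$ is in neither $\phi$ nor $t$), so $\AxEAS$ collapses $[z:=t][x:=t]\phi$ to $[x:=t]\phi$, completing the decomposition.

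Next I would invoke $\AxSUBASEQ$: because $z$ is fresh, the substitution $\phi[z\slash x]$ is admissible, so $[x:=z]\phi\lra\phi[z\slash x]$. Applying $\NECAS'$ and the normality of $[z:=t]$ once more lifts this equivalence inside the modality, giving $[z:=t][x:=z]\phi\lra[z:=t]\phi[z\slash x]$. Chaining this with the decomposition lemma produces $[x:=t]\phi\lra[z:=t]\phi[z\slash x]$, as required.

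The main obstacle is the decomposition step. It is the place where freshness of $z$ is genuinely used (both to apply $\AxEAS$ and, later, to secure admissibility of $\phi[z\slash x]$), and where one must respect that $\AxEFAS$ only supplies $z\approx t$ \emph{within} the scope of $[z:=t]$; consequently the agreement of $[x:=z]$ and $[x:=t]$ cannot be asserted outright but must be pushed through the modality by normality. The remaining steps are the routine congruence arguments for the normal operator $[z:=t]$.
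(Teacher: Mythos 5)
Your proof is correct and takes essentially the same route as the paper: both rest on $\AxSUBASEQ$ (with freshness of $z$ securing admissibility), the interplay of $\AxEFAS$ and $\AxSUBAS$ under the normal modality $[z:=t]$, and a final collapse via $\AxEAS$ using $z\notin\Fv([x:=t]\phi)$. Your only departure is organizational — you factor the paper's middle steps into a standalone decomposition lemma $[x:=t]\phi\lra[z:=t][x:=z]\phi$ before applying $\AxSUBASEQ$, where the paper runs the same manipulations as one linear chain starting from $\AxSUBASEQ$.
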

\begin{proof}
Since $z$ is fresh, $\phi[z\slash x]$ is admissible. We have the following proof (sketch): 
$$
\begin{array}{lll}
(1) & \vdash\phi[z\slash x]\lra [x:=z]\phi & \text{($\AxSUBASEQ$)} \\
(2) &  \vdash [z:=t]\phi[z\slash x]\lra [z:=t][x:=z]\phi &\text{(normality of $[z:=t]$)}\\ 
(3)&  \vdash [z:=t]\phi[z\slash x]\lra [z:=t](z\approx t\land [x:=z]\phi) &\text{($\AxEFAS$)}\\ 
(4)&  \vdash [z:=t]\phi[z\slash x]\lra [z:=t](z\approx t\land [x:=t]\phi) &\text{($\AxSUBAS$)}\\ 
(5)&  \vdash [z:=t]\phi[z\slash x]\lra [z:=t][x:=t]\phi &\text{($\AxEFAS$)}\\ 
(6)&  \vdash [z:=t]\phi[z\slash x]\lra [x:=t]\phi &\text{($\AxEAS$)}\\ 
\end{array}
$$
\end{proof}

%As in first-order logic, based on $\AxSUBtoAS$ and $\NECAS$ and other axioms we can reletter the bound variables in any \ELAS\ formula. 

\section{Completeness}\label{sec.comp}
To prove the completeness, besides the treatments of $[x:=t]$ and termed modality $\K_t$, the major difficulty is the lack of the Barcan-like formulas in \ELAS, which are often used to capture the condition of the constant domain. As in standard first-order logic, we need to provide witnesses for each name, and the Barcan formula can make sure we can always find one when building a successor of some maximal consistent set with enough witnesses. On the other hand, we can build an increasing domain pseudo model without such a formula using the techniques in \cite{Cresswell96}. Inspired by the techniques in \cite{Corsi02}, to obtain a constant domain model, when building the successors in the increasing domain pseudo model, we only create a new witness if all the old ones are not available, and we make sure by formulas in the maximal consistent sets that the new one is not equal to any old ones (throughout the whole model). In this way, there will not be any conflicts between the witnesses when we collect all of them together. We may then create a constant domain by considering the equivalence classes of all the witnesses occurring in the pseudo model with an increasing domain.

Here is the general proof strategy: 
\begin{itemize}
\item Extend the language with countably many new variables.
\item Build a pseudo canonical frame using maximal consistent sets for various sublanguages of the extended language, with witnesses for the names. 
\item Given a maximal consistent set, cut out its generated subframe from the pseudo frame, and build a constant-domain canonical model, by taking certain equivalence classes of variables as the domain. 
\item Show that the truth lemma holds for the canonical model. 
\item Take the reflexive symmetric transitive closure of the relations in pseudo model and show that the truth of the formulas in the original language are preserved. 
\item Extend each consistent set of the original model to a maximal consistent set with witnesses.  
\end{itemize}
\medskip

We first extend the language $\ELAS$ with countably infinitely many new variables, and call the new language $\ELAS^+$ with the variable set $\Var^+$. We say a language $L$ is an \textit{infinitely proper sublanguage} of another language $L'$ if:
\begin{itemize}
\item $L$ and $L'$ only differ in their sets of variables, 
\item $L\subseteq L'$,
\item there are infinitely many new variables in $L'$ that are not in $L$.
\end{itemize}
We use maximal consistent sets w.r.t.\ different infinitely proper sublanguages of $\ELAS^+$ that are extensions of $\ELAS$ to build a pseudo canonical frame.  

\begin{defn}[Pseudo canonical frame]
The pseudo canonical frame $\F^c=\lr{W, R}$ is defined as follows:
\begin{itemize}
\item $W$ is the set of MCS $\Delta$ w.r.t.\ some infinitely proper sublanguages $L_\Delta$ of $\ELAS^+$ such that for each $\Delta\in W$:
\begin{itemize}
\item $\ELAS\subseteq L_\Delta$,
\item For each $a\in\Nm$ there is a variable $x$ in $L_\Delta$ (notation: $x\in \Va(\Delta))$ such that $x\approx a \in \Delta$ (call it $\exists$-property)
%\item For each $[x:=t]\phi\in L^+$ there is a $y\in Var^+$ such that $y\approx t\in \Delta$ and $\phi[y\slash x]$ is admissible.
%\item $x\approx y \in \Delta$ iff $x\approx y\in \Gamma$ and $x, y\in \Va(L_\Delta)$.
\end{itemize}
%\item $I=\{|x|\mid x\in \Var^+\}$ where $|x|$ is the equivalence class generated by $\approx$ according to $\Gamma$.
\item For each $x\in \Var^+$, $\Delta R_{x} \Theta$ iff the following three conditions hold:
\begin{enumerate}
%\item $L_{\Delta}$ is an infinitely proper sublanguage of $L_{\Theta}$. 
\item $x$ in  $\Va(\Delta)$, the set of variables in $L_\Delta$.
\item $\{\phi \mid \K_x\phi\in \Delta\}\subseteq \Theta$.
%\item $x\approx y \in \Delta$ iff $x\approx y\in \Theta$ and $x, y\in \Va(L_\Delta)$ and
\item if $y\in \Va(\Theta)\setminus \Va(\Delta)$ then $ y\not\approx z\in \Theta$ for all $z\in \Va(\Theta)$ such that $z\not=y$. 
\end{enumerate}
%\item $\eta(a, \Delta)=|x|$ iff $a\approx x \in \Delta$ if $x\in \Va(L_\Delta)$.
%\item $\rho(P, \Delta)=\{\vec{|x|}\mid P\vec{x}\in \Delta\}$
\end{itemize}
%We define the canonical assignment $\sigma^*=\{(x, |x|)\mid x\in \Var^+\}.$
\end{defn}

% \begin{proposition}
% The canonical frame is an S5 frame. 
% \end{proposition}
% \begin{proof}
% \end{proof}

\paragraph{Observation} The last condition for $R_x$ makes sure that every new variable in the successor is distinguished from any other variables by inequalities. It is also easy to see that if $t\in L_\Delta$ then there is $x\in \Va(\Delta)$ such that $x\approx t\in \Delta$ by $\exists$-property and $\AxId$.

\begin{proposition}\label{prop.obs}
If $\Delta R_x \Theta$ in $\F^c$, then:
\begin{itemize}
\item $L_\Delta$ is a sublanguage of $L_\Theta$
\item for any $y\not=z\in\Va(\ELAS^+)$:  $y\approx z \in \Delta$ iff $y\approx z\in \Theta$.
\end{itemize}
\end{proposition}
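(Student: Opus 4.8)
The plan is to derive both bullets from the two rigidity axioms $\AxRGDP$ and $\AxRGDN$, using the transmission clause (condition 2) and the freshness clause (condition 3) in the definition of $R_x$. Throughout I keep in mind that $\Delta R_x\Theta$ supplies three facts: $x\in\Va(\Delta)$; every $\phi$ with $\K_x\phi\in\Delta$ lies in $\Theta$; and any variable appearing in $\Theta$ but not in $\Delta$ is forced unequal (in $\Theta$) to every other variable of $\Theta$.

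For the first bullet I only need $\Va(\Delta)\subseteq\Va(\Theta)$, since the two languages differ solely in their variable sets. Take any $y\in\Va(\Delta)$. Because $x,y\in\Va(\Delta)$, the instance $y\approx y\to\K_x y\approx y$ of $\AxRGDP$ is a formula of $L_\Delta$; by $\AxId$ and maximality $y\approx y\in\Delta$, so modus ponens gives $\K_x y\approx y\in\Delta$. Condition 2 then places $y\approx y\in\Theta$, and since $y\approx y$ can only be a formula of $L_\Theta$ when $y$ is one of its variables, we conclude $y\in\Va(\Theta)$.

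For the second bullet I split into the two directions. The forward direction $y\approx z\in\Delta\Rightarrow y\approx z\in\Theta$ is immediate: from $y\approx z\in\Delta$, apply $\AxRGDP$ to obtain $\K_x y\approx z\in\Delta$, then condition 2. The backward direction $y\approx z\in\Theta\Rightarrow y\approx z\in\Delta$ carries the real content and proceeds in two moves. First I would show $y,z\in\Va(\Delta)$: if, say, $y\in\Va(\Theta)\setminus\Va(\Delta)$, then condition 3 (instantiated with the distinct variable $z$) yields $y\not\approx z\in\Theta$, contradicting $y\approx z\in\Theta$ by consistency of $\Theta$; the argument for $z$ is symmetric. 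Once both variables are known to belong to $L_\Delta$, the formula $y\approx z$ is a formula of $L_\Delta$, so by maximality $\Delta$ contains either $y\approx z$ or $y\not\approx z$. In the latter case $\AxRGDN$ gives $\K_x y\not\approx z\in\Delta$, whence condition 2 forces $y\not\approx z\in\Theta$, again contradicting consistency; therefore $y\approx z\in\Delta$.

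I expect the only genuinely delicate step to be the backward direction of the second bullet, and within it the appeal to condition 3 that rules out $y$ or $z$ being a variable freshly introduced in $\Theta$. This is exactly the purpose of the freshness clause: a newly appearing witness variable is forced unequal to all other variables, so it cannot tacitly coincide with an old variable of $\Delta$, and negative rigidity $\AxRGDN$ then transfers the relevant inequality back down the relation. By contrast, the forward direction and the first bullet are routine consequences of positive rigidity $\AxRGDP$ together with $\AxId$ and maximality.
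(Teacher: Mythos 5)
Your proof is correct and takes essentially the same route as the paper's: both bullets are derived from $\AxId$ and $\AxRGDP$ (transferring identities forward via condition 2) together with condition 3 and $\AxRGDN$ (blocking fresh variables and transferring inequalities, which settles the backward direction). The only difference is presentational—you argue the second bullet implication by implication, while the paper splits cases on whether $y,z\in\Va(\Delta)$—but the underlying steps coincide.
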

\begin{proof}
For the first: For all $y\in \Va(\Delta)$, $y\approx y\in\Delta$ therefore $\K_x(y\approx y)\in \Delta$ by $\AxRGDP$, thus $y\approx y\in \Theta$. 

For the second: 
\begin{itemize}
\item Suppose $y,z\in \Va(\Delta)$
\begin{itemize}
\item If $y\approx z \in \Delta$, then $\K_x y\approx z \in \Delta$ by $\AxRGDP$, thus $y\approx z\in\Theta.$
\item If $y\approx z\not\in\Delta$ then $y\not \approx z \in \Delta$ since $\Delta$ is an $L_\Delta$-MCS and $y,z\in\Va(\Delta)$. Then $\K_x y\not \approx z \in \Delta$ by $\AxRGDN$, thus $y\approx z\not\in\Theta$.
\end{itemize}
\item Suppose w.o.l.g. $y\not\in \Va(\Delta)$ thus $y\approx z\not\in \Delta$.
\begin{itemize}
\item If $y\not\in \Va(\Theta)$ or $z\not\in \Va(\Theta)$ then then $y\approx z \in \Delta$ iff $ y\approx z \in \Theta$ trivially holds. 
\item If $y\in \Va(\Theta)$ and $z\in \Va(\Theta)$ then 
$y\not\approx z\in\Theta$ due to the third condition of $R_x$. Therefore $y\approx z\not\in\Theta$ since $\Theta$ is consistent. Thus $y\approx z \in \Delta$ iff $y \approx z\in \Theta. $ 
\end{itemize}
\end{itemize}
\end{proof}
The second part of the above proposition makes sure that we do not have conflicting equalities in different states which are accessible from one to another. 

\begin{lemma}[Existence lemma] \label{lem.ex}
If $\Delta\in W$ and $\hK_t \phi \in \Delta$ then there is a $\Theta\in W$ and an $x\in \Va(L_\Delta)$ such that $\phi\in\Theta$, $x\approx t\in\Delta$, and $\Delta R_{x}\Theta$.
\end{lemma}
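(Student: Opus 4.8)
The plan is to run the standard modal ``existence'' construction, but with enough extra bookkeeping that the witness $\Theta$ we produce again lies in $W$, i.e.\ that it has the $\exists$-property and that the third clause of the definition of $R_x$ is satisfied. First I would reduce to a \emph{variable} index. Since $\hK_t\phi\in\Delta$ and $\Delta$ is an $L_\Delta$-MCS, the term $t$ occurs in $L_\Delta$, so by the Observation following the definition of $\F^c$ (using the $\exists$-property and $\AxId$) there is $x\in\Va(\Delta)$ with $x\approx t\in\Delta$. By $\AxSym$ and $\AxSUBK$ we get $\hK_t\phi\lra\hK_x\phi\in\Delta$, hence $\hK_x\phi\in\Delta$. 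This $x$ will be the index named in the conclusion, and condition~1 of $R_x$, namely $x\in\Va(\Delta)$, holds by choice.

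Next I would set $\Gamma_0=\{\psi\mid\K_x\psi\in\Delta\}\cup\{\phi\}$ and check it is $L_\Delta$-consistent: otherwise normality of $\K_x$ ($\DISTK$, $\NECK$) would give $\K_x\neg\phi\in\Delta$, contradicting $\hK_x\phi\in\Delta$. Any MCS extending $\Gamma_0$ automatically satisfies conditions~1 and~2 of $R_x$ and contains $\phi$, so the whole difficulty is to extend $\Gamma_0$ to an MCS $\Theta$ in a language $L_\Theta\supseteq L_\Delta$ that simultaneously (a) furnishes a variable witness for every name, (b) remains an infinitely proper sublanguage of $\ELAS^+$, and (c) introduces no new variable that could be forced equal to another, so that clause~3 of $R_x$ survives. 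To this end I would split $\Var^+\setminus\Va(\Delta)$ into two infinite pools, drawing witnesses only from one so that $L_\Theta$ still omits infinitely many variables, and enumerate the names $a_0,a_1,\dots$, building an increasing chain $\Gamma_0\subseteq\Gamma_1\subseteq\cdots$ of consistent sets. At stage $n$: if some variable $y$ already present is compatible with witnessing $a_n$ (that is, $\Gamma_n\cup\{y\approx a_n\}$ is consistent) I reuse it by adding $y\approx a_n$ and leave the language fixed; otherwise I take a fresh $z$ from the pool, set $L_{n+1}=L_n\cup\{z\}$ and add $z\approx a_n$.

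The key consistency fact is that adding a fresh witness is harmless. If $\Gamma_n\vdash z\not\approx a_n$ with $z$ fresh, then from a finite conjunction $\delta$ of $\Gamma_n$ with $z\notin\Fv(\delta)$ the rule $\NECAS$ yields $\vdash\delta\to[z:=a_n](z\not\approx a_n)$; together with $\AxEFAS$ ($[z:=a_n](z\approx a_n)$), normality $\AxKAS$ of $[z:=a_n]$ gives $\vdash\delta\to[z:=a_n]\bot$, i.e.\ $\vdash\delta\to\neg\lr{z:=a_n}\top$, which with the executability axiom $\AxEXEAS$ forces $\delta$ inconsistent, a contradiction. Hence every $\Gamma_n$ is consistent, the union $\Gamma_\omega$ is consistent, and I extend it to an $L_\Theta$-MCS $\Theta$; by construction $\Theta$ has the $\exists$-property, contains $\Gamma_0$, and therefore meets conditions~1 and~2 of $R_x$.

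Finally I would verify clause~3. Each genuinely new variable $z$ witnesses some name $a$ with $z\approx a\in\Theta$, and $z$ was created only because every variable $w$ present at that stage failed to witness $a$, so $w\not\approx a\in\Gamma_\omega$; by $\AxSym$ and $\AxTranseq$ this \emph{forces} $z\not\approx w$. Applying this in whichever order a pair of distinct new witnesses was introduced handles the new--new case as well, so every new variable is distinct from all other variables in $\Theta$, giving clause~3 and hence $\Delta R_x\Theta$. The main obstacle, and the reason for the reuse-before-creating discipline, is precisely reconciling the $\exists$-property (which demands plenty of witnesses) with clause~3 (which forbids collapsing any new witness onto another variable); the point that the needed inequalities come out \emph{forced} rather than being freely stipulated is what makes the two requirements compatible without threatening consistency.
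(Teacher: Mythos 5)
Your proof is correct and shares the paper's skeleton: the same reduction to a variable index $x$ with $x\approx t\in\Delta$ via the $\exists$-property and $\AxSUBK$, the same seed set $\{\psi\mid\K_x\psi\in\Delta\}\cup\{\phi\}$ made consistent by $\DISTK$/$\NECK$, the same name-by-name witness construction with a reuse-before-creating discipline, and the same fresh-variable consistency lemma via $\NECAS$, $\AxEFAS$ and executability. Where you genuinely diverge is in how clause~3 of $R_x$ is secured. The paper, when introducing a fresh witness $y_j$ for $a_k$, explicitly adjoins the inequalities $\{y_j\not\approx z\mid z\in\Va(\Theta_k)\}$ and must then prove the enlarged set consistent: its $(\star)$ argument converts an alleged inconsistency into a disjunction $\bigvee_i y_j\approx z_i$, peels off a disjunct consistent with $\Theta_k\cup\{y_j\approx a_k\}$, and derives a contradiction with the failure of the reuse case. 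You instead add only the equality $z\approx a_n$ and observe that the inequalities are already \emph{forced}: since every variable $w$ present at stage $n$ failed the reuse test, $\Gamma_n\vdash w\not\approx a_n$, so in the final MCS $\AxSym$ and $\AxTranseq$ yield $z\not\approx w$, with new--new pairs handled at the later of the two creation stages. This is a real simplification --- the $(\star)$ step disappears --- and it also quietly repairs a slip in the paper: your reuse test ranges over \emph{all} variables present at stage $n$, whereas the paper's conditions (1)/(2) range only over $x\in\Va(\Delta)$; as literally written, the paper's concluding step (``$\Theta_k\cup\{z_i\approx a_k\}$ is consistent, contradicting the failure of (2)'') does not go through when $z_i$ is an earlier fresh witness outside $\Va(\Delta)$, so the wider reuse condition you adopt is exactly what the paper's own argument needs. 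Two cosmetic points: the forced inequalities live in the deductive closure, i.e.\ in $\Theta$ itself rather than literally in $\Gamma_\omega$ as you write; and the paper closes the fresh-witness consistency step citing $\AxDETAS$ where you invoke $\AxEXEAS$ --- both are fine.
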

\begin{proof}
If $\hK_t \phi \in \Delta$  then there is $x\approx t \in\Delta$ for some $x$, due to the fact that $\Delta$ has the $\exists$-property. Let $\Theta^{--}=\{\psi\mid \K_x\psi \in \Delta\}\cup \{\phi\}$. We first show that $\Theta^{--}$ is consistent by $\DISTK$ and $\NECK$ (routine). Next we show that it can be extended to a state in $W$. We can select an infinitely proper sublanguage $L$ of $\ELAS^+$ such that $L_\Delta$ is an infinitely proper sublanguage of $L$. We can list the new variables in $L$ but not in $L_\Delta$ by $y_0, y_1, y_2, \dots$.  We also list the names in $\Nm$ as $a_0, a_1, \dots$. In the following, we add the witness to the names by building $\Theta_i$ as follows:

\begin{itemize}
\item $\Theta_0=\Theta^{--}$
\item $\Theta_{k+1}=\left\{\begin{array}{lll}
\Theta_k & \text{if}  &\text{ $x\approx a_k$ is in $\Theta_k$ for some $x\in \Va(\Delta)$ (1)}\\
\Theta_k\cup\{x_i\approx a_k\} & \text{if} &\text{(1) does not hold but $\{x\approx a_k\}\cup\Theta_{k}$}\\
& &\text{ is consistent for some $x\in\Va(\Delta)$,} \\
&& \text{ and $x_i$ is the first such $x$ according to} \\ &&\text{a fixed enumeration of $\Va(\Delta)$ \qquad(2) }\\
\Theta_k\cup\{y_j\approx a_k\}\cup  & \text{if} & \text{ neither (1) nor (2) holds and } \\ 
\{y_j\not \approx z\mid z \in \Va(\Theta_k)\} &&\text{$y_j$ is the  first in the enumeration }\\ && \text{ of the new variables not in $\Theta_k$}\qquad (3)
\end{array}
\right.
$
\end{itemize}
We can show that $\Theta_k$ is always consistent. Note that $\Theta_0$ is consistent, we just need to show if $\Theta_k$ is consistent and $(1), (2)$ do not hold, then $\Theta_{k+1}$ is consistent too.  Suppose for contradiction that $\Theta_k\cup\{y_j\approx a_k\}\cup \{y_j\not \approx z\mid z \in \Va(\Theta_k)\}$ is not consistent then there are fomulas $\psi_1\dots \psi_n\in\Theta_k$, and $z_{i_1}\dots z_{i_m}\in \Va(\Theta_k)$ such that: 
$$
\vdash \psi_1\land \dots \land \psi_n \land y_j\approx a_k \to \bigvee_{i \in \{i_1, \dots, i_m\}} y_j \approx z_i   \quad (\star)
$$
First note that since $y_j$ is not in $\Theta_{k}$, $\Theta_k\cup\{y_j\approx a_k\}$ is consistent, for otherwise there are $\psi_1\dots \psi_n\in\Theta_k$ such that $\vdash \bigwedge_{i\leq n} \psi_i\to y_j\not\approx a_k$, then by \NECAS, we have $\vdash  \bigwedge_{i\leq n} \psi_i\to [y_j:=a_k] y_j\not\approx a_k$ thus by $\AxEFAS$ we have $\vdash  \bigwedge_{i\leq n} \psi_i\to [y_j:=a_k](a_k\approx y_j\land a_k\not\approx y_j)$, contradicting to the consistency of $\Theta_k$ (by $\AxDETAS$). By $(\star)$, $\Theta_k\cup\{y_j\approx a_k\}$  is consistent with one of $y_j\approx z_i$ for some $z_i$ in $\Va(\Theta_{k})$. Thus $\Theta_k\cup\{z_i\approx a_k\}$ is also consistent which contradicts the assumption that condition (2) and (1) do not hold. 

Then we define $\Theta^-$ to be the union of all $\Theta_k$. Clearly, $\Theta^-$ has the $\exists$-property. We build the language $L'$ based on  $\Va(\Theta^-)$. Note that $L'$ is still an infinitely proper sublanguage of $\ELAS^+$. 

Finally, we extend $\Theta^-$ into an MCS w.r.t.\ $L'$ and it is not hard to show $\Delta R_x \Theta$ by verifying the third condition: when we introduce a new variable we always make sure it is differentiated with the previous one in the construction of $\Theta$. 
\end{proof}

% \noteYW{The construction is inspired by Corsi's construction in her JSL paper \cite{Corsi02} though there the constants are rigid.}

%\noteYW{Maybe the condition can be: for each $[x:=t]\phi$ we have $y\approx x$ and $\phi[y\slash x]$ is admissible.}

Given a state $\Gamma$ in $\F^c$, we can define an equivalence relation $\sim_\Gamma$: $x\sim_\Gamma y$ iff $x\approx y \in \Gamma$ or $x=y$ (note that $x\approx x$ is \textit{not} in $\Gamma$ if $x\not\in L_\Gamma$). Due to $\AxId, \AxSym, \AxTranseq$, $\sim_\Gamma$  is indeed an equivalence relation. When $\Gamma$ is fixed, we write $|x|$ for the equivalence class of $x$ w.r.t.\ $\sim_\Gamma$. By definition, for all $x\not\in\Va(\Gamma)$, $|x|$ is a singleton. 

Now we are ready to build the canonical model. 

\begin{defn}[Canonical model]
Given a $\Gamma$ in $\F^c$ we define the canonical model  $\M_\Gamma=\lr{W_\Gamma, I^c, R^c,\rho^c,\eta^c}$  based on the psuedo canonical frame $\lr{W, R}$
\begin{itemize}
\item $W_\Gamma$ is the subset of $W$ generated from $\Gamma$ w.r.t. the relations $R_x$.
\item $I^c=\{ |x|\mid x\in\Va(W_\Gamma) \}$ where $\Va(W_{\Gamma})$ is the set of all the variables appearing in $W_{\Gamma}$.
\item  $\Delta R^c_{|x|}\Theta$ iff $\Delta R_x\Theta$, for any $\Delta, \Theta \in W_\Gamma$.
\item $\eta^c(a, \Delta)=|x|$ iff $a\approx x \in \Delta$.
\item $\rho^c(P, \Delta)=\{\vec{|x|}\mid P\vec{x}\in \Delta\}.$
% \end{itemize}
% We define the canonical assignment $\sigma^*=\{(x, |x|)\mid x\in \Var^+\}.$
\end{itemize}
\end{defn}

%%%%%\noteYW{JS: $\Delta R_|x| \Theta iff \Delta R_x \Theta or \Delta=\Theta or \Theta R_x \Delta$}
Here is a handy observation. 
\begin{proposition} \label{prop.newv}If $y \in \Va(\Delta)\setminus \Va(\Gamma)$ then $y\not\approx z\in \Delta$ for all $z\in \Va(\Delta)$ such that $z\not=y$.
\end{proposition}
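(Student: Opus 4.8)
The statement lifts the third clause in the definition of $R_x$---which distinguishes a variable from all others only when it is new \emph{relative to the immediate predecessor}---to the whole generated submodel $W_\Gamma$, where $y$ is required to be new only \emph{relative to the root} $\Gamma$. The plan is to prove it by strong induction on the number of $R$-steps needed to generate $\Delta$ from $\Gamma$ in $W_\Gamma$.

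The base case is $\Delta=\Gamma$, where $\Va(\Delta)\setminus\Va(\Gamma)=\emptyset$ and the claim holds vacuously. For the inductive step I would take $\Delta$ reached from $\Gamma$ in $n+1$ steps, assume the claim for every state reached in at most $n$ steps, and fix an immediate predecessor $\Delta_0$ with $\Delta_0 R_x\Delta$; by the first part of Proposition~\ref{prop.obs} we have $\Va(\Delta_0)\subseteq\Va(\Delta)$. Fixing $y\in\Va(\Delta)\setminus\Va(\Gamma)$ and $z\in\Va(\Delta)$ with $z\neq y$, I would then carry out a double case analysis on where $y$ and $z$ enter along the last edge.

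If $y\in\Va(\Delta)\setminus\Va(\Delta_0)$, the third condition of $R_x$ applies to $y$ directly and yields $y\not\approx z\in\Delta$ for every such $z$. Otherwise $y\in\Va(\Delta_0)$, and since $y\notin\Va(\Gamma)$ the induction hypothesis applies to $\Delta_0$, giving $y\not\approx z'\in\Delta_0$ for all $z'\in\Va(\Delta_0)$ with $z'\neq y$. Now if $z\in\Va(\Delta_0)$, this gives $y\approx z\notin\Delta_0$ by consistency, whence the second part of Proposition~\ref{prop.obs} yields $y\approx z\notin\Delta$, and maximal consistency of $\Delta$ (with $y,z\in\Va(\Delta)$) gives $y\not\approx z\in\Delta$; while if $z\in\Va(\Delta)\setminus\Va(\Delta_0)$, then $z$ is new at the last step, so the third condition of $R_x$ applied to $z$ (taking the old variable $y\in\Va(\Delta_0)\subseteq\Va(\Delta)$ as the witness) gives $z\not\approx y\in\Delta$, and $\AxSym$ delivers $y\not\approx z\in\Delta$. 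This exhausts the cases.

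The delicate point---and the reason a single appeal to the third condition of $R_x$ does not suffice---is that this condition is purely local: it never speaks of variables that are new relative to $\Gamma$ but already present in $\Delta_0$. The work is therefore to propagate the inequalities forward, which is exactly what the induction hypothesis together with the equality-preservation of Proposition~\ref{prop.obs} accomplish. The subcase I expect to require the most attention is the one where the comparison variable $z$ is introduced \emph{later} than $y$ along the generating path; this is not covered by the induction hypothesis but follows from the third condition applied at the moment $z$ enters, precisely because $y$ is already available there, combined with symmetry of $\approx$.
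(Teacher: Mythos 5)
Your proof is correct and takes essentially the same approach as the paper: the paper's proof is just a one-line appeal to condition 3 of $R_x$, Proposition~\ref{prop.obs}, and the fact that $W_\Gamma$ is generated from $\Gamma$, and your induction on the length of the generating path---invoking condition~3 at the step where a variable first enters and Proposition~\ref{prop.obs} to carry the resulting (in)equalities forward---is exactly the intended unpacking of that remark, with all cases (including the delicate one where $z$ enters later than $y$) handled correctly.
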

\begin{proof}
Due to the condition 3 of the relation $R_x$ in $\F^c$ and Proposition \ref{prop.obs}, and the fact that $W_\Gamma$ is generated from $\Gamma$.  
\end{proof}
% \begin{proposition}
% For any $x,y$, $x\approx y\in  \Delta$ implies $x\sim_\Gamma y$. On the other hand, if $x\sim_\Gamma y$ and $x\in L_\Delta$ then $x\approx y\in \Delta$. 
% \end{proposition}
% \begin{proof}
% Due to the fact that $W_\Gamma$ is generated from $\Gamma$ and 
% \end{proof}
%\noteYW{JS: if $x\neq y$ then $x\approx y\in\Delta$ iff $x\approx y\in\Gamma$}

\begin{proposition}
The canonical model is well-defined. 
\end{proposition}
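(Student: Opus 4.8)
The plan is to verify that every clause in the definition of $\M_\Gamma$ genuinely determines the object it is meant to: the clauses specifying functions must determine \emph{total} functions, and the clauses that mention a choice of class-representative must be shown to be \emph{representative-independent}. Concretely I would check four things in turn: that the classes $|x|$ forming $I^c$ are unambiguous (and that $I^c$ is non-empty), that the indexed relation $R^c_{|x|}$ does not depend on which member of $|x|$ is chosen, that $\eta^c$ is a total function, and that membership in $\rho^c(P,\Delta)$ depends only on the classes involved.

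First I would note that the classes used to build $I^c$ are unambiguous throughout the generated submodel. Any $\Delta\in W_\Gamma$ is reached from $\Gamma$ by finitely many $R_x$-steps, so iterating Proposition \ref{prop.obs} gives $L_\Gamma\subseteq L_\Delta$ and $y\approx z\in\Gamma$ iff $y\approx z\in\Delta$ for all $y\neq z$. Hence among the variables of $\Va(\Gamma)$ the relation $\sim_\Gamma$ agrees with the equalities holding in every state, while each variable outside $\Va(\Gamma)$ has a singleton class by definition (consistently with the inequalities recorded in Proposition \ref{prop.newv}); so $|x|$ names the same element of $I^c$ no matter which state it is read off from. Non-emptiness of $I^c$ is immediate: since $\Nm\neq\emptyset$ and $\Gamma$ has the $\exists$-property, $\Va(\Gamma)\neq\emptyset$.

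The central and most delicate point is the well-definedness of $R^c$, and this is the step I expect to be the main obstacle, since it is the only place where representative-independence is genuinely in question. Suppose $|x|=|y|$; I must show $\Delta R_x\Theta$ iff $\Delta R_y\Theta$. If $x=y$ this is trivial, so assume $x\neq y$, whence $x\approx y\in\Gamma$ (so in particular $x,y\in\Va(\Gamma)$) and, by the previous paragraph, $x\approx y\in\Delta$ and $x,y\in\Va(\Delta)$ for every $\Delta\in W_\Gamma$. Condition (1) in the definition of $R_x$ then holds for the index $x$ exactly when it holds for $y$, and condition (3) does not mention the index at all. For condition (2), $\AxSUBK$ yields $\K_x\phi\lra\K_y\phi\in\Delta$, so $\{\phi\mid\K_x\phi\in\Delta\}=\{\phi\mid\K_y\phi\in\Delta\}$; the three conditions therefore coincide for $x$ and $y$. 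The argument rests essentially on the substitution axiom $\AxSUBK$ together with the global consistency of equalities.

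Finally I would dispatch $\eta^c$ and $\rho^c$ by routine appeals to the equality axioms. For $\eta^c$, the $\exists$-property of $\Delta$ together with $\AxSym$ supplies some $x\in\Va(\Delta)$ with $a\approx x\in\Delta$, giving existence; and if $a\approx x,a\approx y\in\Delta$ then $\AxSym$ and $\AxTranseq$ give $x\approx y\in\Delta$, so $|x|=|y|$, giving uniqueness. Thus $\eta^c(a,\Delta)$ is a single, well-defined element of $I^c$ for every $a$ and $\Delta$. For $\rho^c$, representative-independence follows from $\AxSUBP$: if $P\vec{x}\in\Delta$ and $|x_i|=|y_i|$ for all $i$, then $\vec{x}\approx\vec{y}\in\Delta$ and hence $P\vec{x}\lra P\vec{y}\in\Delta$, so whether a tuple of classes lies in $\rho^c(P,\Delta)$ depends only on the classes and not on the chosen representatives. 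Collecting these four verifications establishes that $\M_\Gamma$ is well-defined.
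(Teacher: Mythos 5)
Your proof is correct and takes essentially the same route as the paper's: representative-independence of $R^c_{|x|}$ via the persistence of variable (in)equalities along the generated submodel together with $\AxSUBK$, and totality plus uniqueness of $\eta^c$ via the $\exists$-property, $\AxSym$/$\AxTranseq$, and the inequalities forced on fresh variables. The only differences are cosmetic: you additionally verify $I^c$ and $\rho^c$ (the paper omits these, and $\rho^c$ needs no check since it is defined as a direct image under the quotient map, not by a biconditional), and you obtain $\eta^c$-uniqueness from the iterated biconditional of Proposition~\ref{prop.obs} rather than the paper's case-split contradiction via Proposition~\ref{prop.newv}, which rests on the same underlying facts.
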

\begin{proof}
\begin{itemize}
%\item For $D$: due to $\AxId, \AxSym, \AxTranseq$, $\approx$ induces  an equivalence relation. 
\item For $R_{|x|}$: We show that the choice of the representative in $|x|$ does not change the definition. Suppose $x\sim_\Gamma y$ then either $x=y$ or $x\approx y\in\Gamma$. In the first case, $\Delta R_{x}\Theta $ iff $\Delta R_{y}\Theta $. In the second case, suppose $\Delta R_{x}\Theta $. We show that the three conditions for 
$\Delta R_{y}\Theta $ hold.  For condition 1, $y\in\Va(\Delta)$ since $y\in\Va(\Gamma)$ and $\Delta$ is generated from $\Gamma$ by $R$.  For condition 2, we just need to note that  $\vdash y\approx x\to (\K_{x}\phi \lra \K_{y}\phi)$ by $\AxSUBK$. And condition 3 is given directly by condition 3 for $\Delta R_{x}\Theta$.
\item For $\eta(a, \Delta)$: We first show that the choice of the representative in $|x|$ does not change the definition by $\vdash x\approx y \to (a\approx x\lra a\approx y)$ ($\AxTranseq$). Then we need to show that $\eta^c(a,\Delta)$ is unique. Note that due to the $\exists$-property, there is always some $x$ such that $x \approx a$ in $\Delta$. Suppose towards contradiction that $a\approx x\in \Delta$, $a\approx y\in \Delta$ and $x\not \sim_\Gamma y$ then clearly $x, y$ cannot be both in $\Va(\Gamma)$ for otherwise $x\not\approx y\in\Delta$. Suppose w.l.o.g.\ $x$ is not in  $\Va(\Gamma)$ then we should have $x\not\approx y\in \Delta$ due to Proposition \ref{prop.newv}, contradicting the assumption that $\Delta$ is consistent.  
%\item For $\rho(P, \Delta)$: It is well-defined due to $\vec{x}\approx \vec{y}\to (P\vec{x}\lra P\vec{y})$ by $\AxSUBP.$
\end{itemize}
\end{proof}
\begin{proposition}\label{prop.trans}
$R_{|x|}$ is transitive. 
\end{proposition}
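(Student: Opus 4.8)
The plan is to reduce transitivity of $R_{|x|}$ to transitivity of the underlying relation $R_x$. Since the well-definedness proposition shows that $\Delta R_{|x|}\Theta$ is independent of the chosen representative, it suffices to fix one representative $x$ and prove that $\Delta R_x \Theta$ and $\Theta R_x \Lambda$ together imply $\Delta R_x \Lambda$, for $\Delta,\Theta,\Lambda\in W_\Gamma$. I would then verify the three defining conditions of $\Delta R_x \Lambda$ in turn. Condition 1 ($x\in\Va(\Delta)$) is immediate, as it is already part of the hypothesis $\Delta R_x\Theta$; and since $L_\Delta$ is a sublanguage of $L_\Theta$ by Proposition \ref{prop.obs}, we also have $x\in\Va(\Theta)$, so the second hypothesis is genuinely indexed by the same $x$.

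For condition 2, I would run the standard canonical argument for the transitivity axiom. Suppose $\K_x\phi\in\Delta$. Because $x$ is a variable, $\AxTransK$ gives $\K_x\phi\to\K_x\K_x\phi$, so $\K_x\K_x\phi\in\Delta$; condition 2 of $\Delta R_x\Theta$ then yields $\K_x\phi\in\Theta$, and condition 2 of $\Theta R_x\Lambda$ yields $\phi\in\Lambda$. This is precisely where the restriction of the introspection axioms to variable indices is used, the analogous step being unavailable for $\K_t$ with a name $t$.

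The part I expect to be the real work is condition 3: every variable $y$ fresh to $\Delta$ but occurring in $\Lambda$ must be separated by inequalities from every other variable of $\Lambda$. I would argue by cases on whether $y$ already occurs in the intermediate state $\Theta$. If $y\notin\Va(\Theta)$, then $y\in\Va(\Lambda)\setminus\Va(\Theta)$, and condition 3 of $\Theta R_x\Lambda$ gives the conclusion directly. If $y\in\Va(\Theta)$, then $y\in\Va(\Theta)\setminus\Va(\Delta)$, so condition 3 of $\Delta R_x\Theta$ separates $y$ from every variable of $\Theta$; for a given $z\in\Va(\Lambda)$ with $z\not=y$ I would split further on whether $z\in\Va(\Theta)$. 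When $z\in\Va(\Theta)$ we have $y\not\approx z\in\Theta$ and push it forward to $\Lambda$ using the preservation of (in)equalities in Proposition \ref{prop.obs}; when $z\notin\Va(\Theta)$ it is $z$ that is fresh along $\Theta R_x\Lambda$, so condition 3 of $\Theta R_x\Lambda$ gives $z\not\approx y\in\Lambda$, which $\AxSym$ rewrites as $y\not\approx z\in\Lambda$.

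The main subtlety to watch is the interaction with the languages: each conclusion of the form $y\not\approx z\in\Lambda$ requires $y,z\in\Va(\Lambda)$ so that the formula lies in $L_\Lambda$ and is thus decided by the maximal consistent set, and Proposition \ref{prop.obs} must be applied in the form that transfers negated equalities, not merely equalities. Once the case split is laid out, each branch collapses to a single appeal to one of the given conditions together with Proposition \ref{prop.obs} or $\AxSym$, so no genuinely new combinatorics is involved.
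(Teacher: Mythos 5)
Your proof is correct, and conditions 1 and 2 are handled exactly as in the paper (condition 1 immediate from $\Delta R_x\Theta$, condition 2 by the canonical $\AxTransK$ argument, with the same observation that this is where the variable-only restriction on introspection matters). Where you genuinely diverge is condition 3. The paper's proof is a one-liner: from $y\in\Va(\Lambda)\setminus\Va(\Delta)$ it infers $y\notin\Va(\Gamma)$ (since $\Va(\Gamma)\subseteq\Va(\Delta)$ because $W_\Gamma$ is generated from the root $\Gamma$) and then applies Proposition \ref{prop.newv} to $\Lambda$, which already packages, once and for all, the propagation of freshness inequalities along arbitrary generated paths. You instead give a self-contained local argument: a case split on whether $y\in\Va(\Theta)$, and in the positive case a further split on whether $z\in\Va(\Theta)$, combining condition 3 of each relational step with the equality-transfer clause of Proposition \ref{prop.obs} and $\AxSym$. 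This is valid and has the modest advantage of not invoking generatedness of $W_\Gamma$ at all --- it proves transitivity of $R_x$ on any three pseudo-frame states --- essentially re-deriving the two-step instance of Proposition \ref{prop.newv} by hand; the paper's route is shorter because it reuses a lemma it needs anyway (e.g.\ for well-definedness of $\eta^c$). One refinement to your middle case: Proposition \ref{prop.obs} as stated transfers only membership of equalities, so from $y\not\approx z\in\Theta$ you get $y\approx z\notin\Lambda$ and must appeal to maximality of $\Lambda$ with $y,z\in\Va(\Lambda)$ to conclude $y\not\approx z\in\Lambda$ --- which you correctly flag --- though a slightly more direct route is available: $\AxRGDN$ gives $\K_x\, y\not\approx z\in\Theta$, and condition 2 of $\Theta R_x\Lambda$ drops $y\not\approx z$ into $\Lambda$ immediately.
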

\begin{proof}
Suppose $\Delta R_{|x|}\Theta$ and $\Theta R_{|x|} \Lambda$ then in $\F^c$ $\Delta R_x\Theta$ and $\Theta R_x\Lambda$ (note that the representative of $|x|$ does not really matter since $R_{|x|}$ is well-defined). We have to show the three conditions for $\Delta R_x\Lambda$.  For condition 1, $x\in\Va(\Delta)$ since $\Delta R_x\Theta$. For condition 2, by Axiom $\AxTransK$, we have for any $\phi$ such that $\K_x\phi\in \Delta$ we have $\K_x\K_x\phi\in\Delta$ thus $\K_x\phi\in \Theta$ thus $\phi\in \Lambda$, by the definition of $R_{x}$. For condition 3, suppose $y\in\Va(\Lambda)\setminus\Va(\Delta)$.  Then since $\Delta\in W$, $y\not\in\Va(\Gamma)$, so by Proposition~\ref{prop.newv} we are done.
\end{proof}

Before proving the truth lemma, we have two simple observations: 
\begin{proposition}. \label{prop.equiv}
\begin{itemize}
\item[(1)] If $x\approx y$ is in some $\Delta\in W_\Gamma$ then $x\sim_\Gamma y$.
\item[(2)] If $x\sim_\Gamma y$ then $x=y$ or $x\approx y$ in all the $\Delta\in W_\Gamma$. 
\end{itemize}
%
%For any $x\not=y$, $x\sim_\Gamma y$ iff $x\approx y\in \Delta$ for %any $\Delta\in W_\Gamma$. Therefore $x\approx y\in \Delta$ implies %$x\sim_\Gamma y$ by the definition of $\sim_\Gamma$. 
\end{proposition}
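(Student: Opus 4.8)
The plan is to reduce both parts to a single auxiliary fact: along any $R$-path inside $W_\Gamma$, membership of an equality $y\approx z$ (with $y\ne z$) is preserved in \emph{both} directions. This is precisely the content of the second bullet of Proposition~\ref{prop.obs}, which for a single edge $\Delta R_x \Theta$ gives the biconditional $y\approx z\in\Delta$ iff $y\approx z\in\Theta$ for all $y\ne z$.

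First I would record the auxiliary claim: for every $\Delta\in W_\Gamma$ and all variables $y\ne z$, $y\approx z\in\Delta$ iff $y\approx z\in\Gamma$. Since $W_\Gamma$ is the subframe generated from $\Gamma$ by the relations $R_x$, every $\Delta\in W_\Gamma$ is linked to $\Gamma$ by a finite chain $\Gamma=\Delta_0\,R_{x_1}\,\Delta_1\cdots R_{x_n}\,\Delta_n=\Delta$. Applying Proposition~\ref{prop.obs} to each edge and composing the biconditionals by a straightforward induction on $n$ yields the claim.

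With this in hand both parts are immediate once the trivial $x=y$ case is dispatched. For (1), if $x\approx y$ lies in some $\Delta\in W_\Gamma$ and $x\ne y$, the auxiliary claim pushes the equality back along the generating chain to $\Gamma$, so $x\approx y\in\Gamma$ and hence $x\sim_\Gamma y$; and if $x=y$ then $x\sim_\Gamma y$ by definition. For (2), if $x\sim_\Gamma y$ and $x\ne y$, then $x\approx y\in\Gamma$, and the auxiliary claim propagates it \emph{forward} so that $x\approx y\in\Delta$ for every $\Delta\in W_\Gamma$; the case $x=y$ is just the first disjunct of the conclusion.

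Since the heavy lifting has already been done in Proposition~\ref{prop.obs}, I do not expect any substantial obstacle here. The one point requiring care is that we genuinely rely on the full biconditional (both directions) across each edge, because (1) transports equalities backward up the generating chain while (2) transports them forward down it. As a consistency check one may also invoke Proposition~\ref{prop.newv}: no equality $x\approx y$ with $x\ne y$ and $x\notin\Va(\Gamma)$ can occur in any $\Delta\in W_\Gamma$, which agrees with the conclusion that such equalities must already be witnessed in $\Gamma$.
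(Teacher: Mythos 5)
Your proof is correct, and it takes a genuinely different (and arguably cleaner) decomposition than the paper's. The paper does not isolate your auxiliary claim; it argues the two parts separately at the axiom level. For part (2) it simply propagates $x\approx y\in\Gamma$ forward along the generating chain using \AxRGDP{} and the fact that every $\Delta\in W_\Gamma$ is connected to $\Gamma$. For part (1) it first invokes Proposition~\ref{prop.newv} to conclude that $x$ and $y$ must both lie in $\Va(\Gamma)$ (otherwise $x\not\approx y\in\Delta$ would contradict $x\approx y\in\Delta$ and consistency), so that maximality of $\Gamma$ yields either $x\approx y\in\Gamma$ or $x\not\approx y\in\Gamma$; in the latter case \AxRGDN{} pushes the inequality forward to $\Delta$, a contradiction. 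Your route instead factors both parts through the full biconditional in the second bullet of Proposition~\ref{prop.obs}, composed along the chain by an explicit induction. The two arguments ultimately rest on the same ingredients --- \AxRGDP{}, \AxRGDN{}, and condition 3 of $R_x$, since Proposition~\ref{prop.obs} is itself proved from exactly these --- but your packaging buys two things: it makes explicit the chain induction that the paper compresses into the phrase ``connected to $\Gamma$'', and it treats variables outside $\Va(\Gamma)$ uniformly, because the backward direction of the biconditional already absorbs the new-variable cases that the paper must dispatch separately via Proposition~\ref{prop.newv}. Conversely, the paper's version points more directly at which axiom is responsible for which direction of transport. Your closing consistency check via Proposition~\ref{prop.newv} is sound but redundant given the auxiliary claim.
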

\begin{proof}
For the first, suppose $x\approx y$ is in some $\Delta\in W_\Gamma$. We just need to consider the case when $x\not=y$ for if $x=y$ then $x\sim_\Gamma y$ by definition. By Proposition \ref{prop.newv}, $x$ and $y$ must be both in $\Va(\Gamma)$, thus by $\AxRGDN$ and the fact that $\Delta$ is connected to $\Gamma$, $x\approx y\in \Gamma$.  

The second is immediate by the definition of $\sim_\Gamma$: if $x\approx y\in\Gamma$ then $x\approx y\in \Delta$ due to $\AxRGDP$ and the fact that all the $\Delta\in W_\Gamma$ are connected to $\Gamma$. 

%Note that if $x\not=y$, then $x\sim_\Gamma y\iff x\approx y\in\Gamma$. 
%By $\AxRGDP$ and $\AxRGDN$ and the definition of $R_{|x|}$ which is proved to be transitive, it is clear that the statement holds if $x, y\in L_\Gamma$. Suppose $x\not\in L_\Gamma$, left-to-right direction still holds by $\AxRGDP$. The right-to-left direction trivially holds due to Proposition \ref{prop.newv}. 
\end{proof}

Although $R_{|x|}$ is transitive, the model $\M_\Gamma$ is not reflexive nor symmetric in general. For the failure of reflexivity, note that some $x$ may  not be in the language of some state. For the failure of symmetry: We may have $\Delta R_{|x|}\Theta$ and $L_\Delta\subset L_\Theta$ thus it is not the case that $\Theta R_{|x|}\Delta$ by Proposition \ref{prop.obs}. We will turn this model into an S5 model later on. Before that we first prove a (conditional) truth lemma w.r.t.\ $\M_\Gamma$ and the canonical assignment $\sigma^*$ such that $\sigma^*(x)=|x|$ for all $x\in \Va(W_\Gamma)$.
\begin{lemma}[Truth lemma] \label{lemtruth}For any $\phi\in \ELAS^+$ and any $\Delta \in W$, if $\phi\in L_\Delta$ then:  
$$\M_\Gamma, \Delta, \sigma^* \vDash \phi\iff \phi\in \Delta$$
\end{lemma}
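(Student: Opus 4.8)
The plan is to prove the Truth Lemma by induction on the structure of $\phi$, with the induction hypothesis quantified over all states $\Delta \in W$ (not just those reachable from $\Gamma$) so that the modal and assignment cases have enough room to maneuver. Before starting, I would record the key semantic fact that the canonical assignment $\sigma^*$ interacts with names via $\eta^c$: for any term $t$, we have $\sigma^*_\Delta(t) = |x|$ exactly when $x \approx t \in \Delta$, since for a variable $t=y$ this is $|y|$ by definition, and for a name $t=a$ this is $\eta^c(a,\Delta)=|x|$ iff $a \approx x \in \Delta$. The $\exists$-property and $\AxId$ guarantee such an $x$ always exists whenever $t \in L_\Delta$, which is why the hypothesis $\phi \in L_\Delta$ is needed throughout.

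For the base cases, the atomic clause $t \approx t'$ reduces to showing $\sigma^*_\Delta(t)=\sigma^*_\Delta(t')$ iff $t \approx t' \in \Delta$; picking witnesses $x \approx t$ and $y \approx t'$ in $\Delta$, this follows from Proposition \ref{prop.equiv}, $\AxTranseq$, and $\AxSym$ linking $x \sim_\Gamma y$ to membership of $t \approx t'$. The predicate clause $P\vec{t}$ is handled similarly using the definition of $\rho^c$ together with $\AxSUBP$ to transfer between $\vec{t}$ and its witness variables $\vec{x}$. The Boolean cases $\neg\phi$ and $\phi \land \psi$ are routine from the MCS properties, and I would note that $L_\Delta$ is closed under subformulas so the hypothesis $\phi \in L_\Delta$ passes to the immediate subformulas.

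The modal case $\K_t\phi$ is where the real work lies. Choosing a witness $x \approx t \in \Delta$, I would use $\AxSUBK$ to replace $\K_t$ by $\K_x$, then argue both directions. For the ($\Leftarrow$) direction, if $\K_x\phi \in \Delta$ then for every $\Theta$ with $\Delta R^c_{|x|}\Theta$ we have $\Delta R_x \Theta$, hence $\phi \in \Theta$ by condition 2 of $R_x$, and the induction hypothesis (noting $\phi \in L_\Theta$ since $L_\Delta$ is a sublanguage of $L_\Theta$ by Proposition \ref{prop.obs}) gives truth at $\Theta$. For the ($\Rightarrow$) direction, if $\K_x\phi \notin \Delta$ then $\hK_x\neg\phi \in \Delta$, and the Existence Lemma (Lemma \ref{lem.ex}) produces a successor $\Theta$ with $\neg\phi \in \Theta$ and $\Delta R_x \Theta$, so by the induction hypothesis $\phi$ fails at $\Theta$, refuting $\K_t\phi$ at $\Delta$. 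I expect this existence-lemma invocation to be the main obstacle, since one must check that the witnessing successor genuinely lies in $W_\Gamma$ and carries $\phi$ in its language; here reachability from $\Gamma$ and the sublanguage property of Proposition \ref{prop.obs} are essential.

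For the assignment case $[x:=t]\phi$, the semantics shift $\sigma^*$ to $\sigma^*[x \mapsto \sigma^*_\Delta(t)]$, so the subtlety is that the shifted assignment may no longer be the canonical one at $\Delta$. I would first reletter via Proposition \ref{prop.reletter} to a fresh variable if needed, then pick a witness $z$ with $z \approx t \in \Delta$ so that $\sigma^*_\Delta(t)=|z|$, and relate $[x:=t]\phi$ to membership of $\phi[z/x]$ using $\AxSUBASEQ$ (which gives $\phi[z/x] \leftrightarrow [x:=z]\phi$) together with $\AxSUBAS$ to swap $t$ for $z$. The shifted assignment $\sigma^*[x \mapsto |z|]$ then agrees with $\sigma^*$ on the relettered formula $\phi[z/x]$, letting me apply the induction hypothesis to $\phi[z/x]$ and close the case; the admissibility condition and freshness of $z$ ensure the substitution behaves correctly.
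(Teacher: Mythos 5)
Your proposal is correct and follows essentially the same route as the paper's own proof: induction on structure, witnesses for terms via the $\exists$-property together with \AxId, \AxSUBK\ and the Existence Lemma for the $\K_t$ case, and \AxSUBAS, \AxSUBASEQ\ (subsuming \AxSUBtoAS) with relettering via Proposition~\ref{prop.reletter} for the assignment case. The only cosmetic difference is that you state the key fact $\sigma^*_\Delta(t)=|x|$ iff $x\approx t\in\Delta$ up front, whereas the paper re-derives it inline in each case; the substance is identical.
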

\begin{proof}
We do induction on the structure of the formulas. 

For the case of $t\approx t'\in L_\Delta,$ by $\exists$-property we have some $x,y\in \Va(L_\Delta)$ such that $t\approx x\in \Delta, t'\approx y\in\Delta$. 
\begin{itemize}
\item Suppose $t\approx t'\in \Delta$. Since $t\approx x\in \Delta, t'\approx y\in\Delta$, by $\AxTranseq$, $x\approx y\in \Delta$. Now by Proposition \ref{prop.equiv}, $x\sim_\Gamma y$. Thus $\sigma^*(t,\Delta)=|x|=|y|=\sigma^*(t',\Delta)$, then $\M_\Gamma, \Delta, \sigma^* \vDash t\approx t'.$
%Note that if $x, y\in \Va(\Gamma)$ then $x\approx y\in \Gamma$ due to $\AxRGDN$ and the fact that $\Gamma$ is a maximal consistent set. Therefore $\sigma^*(t,\Delta)=|x|=|y|=\sigma^*(t',\Delta)$, thus $\M_\Gamma, \Delta, \sigma^* \vDash t\approx t'$. Now suppose w.l.o.g.\ that $x$ is not in $\Va(\Gamma)$, then since $x\approx y\in\Delta$ we know $x=y$ by Proposition \ref{prop.obs}. 

\item If $\M_\Gamma, \Delta, \sigma^* \vDash t\approx t'$ then $ \sigma^*(t,\Delta)=\sigma^*(t',\Delta)$. If $t$ and $t'$ are variables $x,y$, then $|x|=|y|$ i.e., $x\sim_\Gamma y$. By Proposition \ref{prop.equiv}, either $x=y$ or $x\approx y\in \Delta$. Actually, even if $x=y$, since $x$ is in $\Va(\Delta)$, $x\approx x\in \Delta$ by $\AxId$. If $t$ and $t'$ are both in $\Nm$, then by the definition of $\eta^c$, there are $t\approx x$ and $t\approx y$ in $\Delta$ and $y\in |x|$, which means $x \sim_\Gamma y$. By Proposition \ref{prop.equiv} and $\AxId$ again, $x\approx y\in \Delta$ therefore $t\approx t'\in \Delta$. Finally, w.l.o.g.\ if $t\in\Va(\Delta)$ and $t'\in\Nm$, then by definition of $\eta$, $t'\approx x\in\Delta$ for some $x$ and $t\sim_\Gamma x$. Again, since $x, t\in \Va(\Delta)$, $x\approx t\in \Delta$ therefore $t\approx t'\in \Delta$.    
\end{itemize}

For the case of $P\vec{t}\in L_\Delta$. %\noteYW{Please check this. Should be similar to the above.}
\begin{itemize}
\item If $P\vec{t}\in\Delta$, then by $\exists$-property, there are $\vec{x}$ in $\Va(\Delta)$ such that $\vec{x}\approx\vec{t}\in\Delta$. Then by $\AxSUBP$ we have $P\vec{x}\in\Delta$. Thus by the definition of $\rho^c$, $\vec{|x|}\in \rho^c(P, \Delta)$. By the definitions of $\sigma^*$ and $\eta^c$, $\vec{\sigma^*(t,\Delta)}=\vec{|x|}$. Therefore $\M_\Gamma, \Delta, \sigma^* \vDash P\vec{t}$. 
\item If $\M_\Gamma, \Delta, \sigma^* \vDash P\vec{t}$, then the vector $\vec{\sigma^*(t,\Delta)}\in \rho^c(P, \Delta)$. It means that $\vec{\sigma^*(t,\Delta)}$=$\vec{|x|}$ (coordinate-wise) for some $P\vec{x}\in\Delta$ such that $\vec{t}\approx \vec{y}\in\Delta$ for some $\vec{y}$ such that $\vec{x}\sim_\Gamma \vec{y}$. Note that since $P\vec{x}\in\Delta$, $\vec{x}\in\Va(\Delta)$. It is not hard to show that $\vec{x}\approx\vec{y}\in\Delta$ by Proposition \ref{prop.equiv}. Now based on $\AxSUBP$, $P\vec{t}\in\Delta$. 
\end{itemize}

The boolean cases are routine.

For the case of $\K_t\psi\in L_\Delta$: 
\begin{itemize}
\item Suppose $\K_t\psi\not\in \Delta$, then $\hK_t\neg \psi\in \Delta$. By Lemma \ref{lem.ex} there is some variable $x$ and $\Theta\in W_\Gamma$ such that $\Delta R_{|x|}\Theta$, $x\approx t\in \Delta$ and $\neg\psi\in\Theta$. Therefore, by the induction hypothesis, $\M_\Gamma, \Theta, \sigma^* \nvDash \psi$ and so $\M_\Gamma, \Delta, \sigma^* \nvDash \K_x\psi$. If $t$ is a variable then $x\sim_\Gamma t$ by Proposition\ \ref{prop.equiv}, thus $\sigma^*(\Delta, t)=|x|$. If $t$ is a name then by definition $\eta^c(\Delta, t)=|x|$. Therefore in either case we have $\M_\Gamma, \Delta, \sigma^* \nvDash \K_t\psi$.
\item Suppose $\K_t\psi\in \Delta$, then by $\exists$-property, there is an $x\in \Va(\Delta)$ such that $x\approx t\in \Delta$, thus $\K_x\psi\in\Delta$ by $\AxSUBK$ and $\sigma^*(t, \Delta)=|x|$. By induction hypothesis, $\M_\Gamma, \Delta, \sigma^*\vDash x\approx t$. Now consider any $R_{|x|}$-successor $\Theta$ of $\Delta$, it is clear that $\psi\in \Theta$ by definition of $R_{|x|}$. Now by induction hypothesis again, $\M_\Gamma, \Theta, \sigma^*\vDash \psi$. Therefore, $\M_\Gamma, \Delta, \sigma^* \vDash \K_t\psi$.
\end{itemize}

For the case of $[x:=t]\psi\in L_\Delta$: 
\begin{itemize}
\item Suppose $\M_\Gamma, \Delta, \sigma^* \vDash [x:=t]\psi$. 
\begin{itemize}
\item If $t\in \Nm$, by $\exists$-property, we have $y\approx t\in \Delta$ for some $y\in \Va(\Delta).$ By induction hypothesis, $\M_\Gamma, \Delta, \sigma^* \vDash y\approx t$. Therefore $\sigma^*(\Delta, t)=|y|$ thus  $\M_\Gamma, \Delta, \sigma^*[x\mapsto |y|] \vDash \psi$. Now if $\psi[y\slash x]$ is admissible then we have $\M_\Gamma, \Delta, \sigma^* \vDash \psi[y\slash x]$. By IH, $\psi[y\slash x]\in \Delta$. Thus $[x:=y]\psi\in\Delta$ by $\AxSUBtoAS$. Since $t\approx y\in \Delta$, thus $[x:=t]\psi\in\Delta$ by $\AxSUBAS$. Note that if $\psi[y\slash x]$ is not admissible, then we can reletter $\psi$ to have an equivalent formula $\psi'\in L(\Delta)$ such that $\psi'[y\slash x]$ is admissible. Then the above proof still works to show that $[x:=t]\psi'\in\Delta$. Since relettering can be done in the proof system by Proposition \ref{prop.reletter}, we have $[x:=t]\psi\in\Delta$.  
\item If $t$ is a variable $y$, then $\M_\Gamma, \Delta, \sigma^*[x\mapsto |y|] \vDash \psi$. From here a similar (but easier) proof like the above suffices.
\end{itemize}
\item Supposing $[x:=t]\psi\in\Delta$, by the $\exists$-property of $\Delta$, we have some $y\in \Va(\Delta)$ such that $t\approx y\in\Delta$. Like the proof above we can assume w.l.o.g. that $\psi[y\slash x]$ is admissible, for otherwise we can reletter $\psi$ first. Thus $[x:=y]\psi\in\Delta$ by $\AxSUBAS$. Then by $\AxSUBASEQ$, $\psi[y\slash x]\in \Delta$. By IH, $\M_\Gamma, \Delta, \sigma^*\vDash \psi[y\slash x] \land t\approx y$. By the semantics and the assumption that $\psi[y\slash x]$ admissible, $\M_\Gamma, \Delta, \sigma^* \vDash [x:=t]\psi$.
\end{itemize}
\end{proof}
%The proof can be found in Appendix \ref{app.truth}. 

\medskip
%\noteYW{We can illustrate the above truth lemma by considering $[x:=c]\K_d P_x$.}

Now we will transform the canonical model into a proper S5 model by taking the reflexive, symmetric and transitive closure of each $R_{|x|}$ in $\M_\Gamma$. Note that although $\M_\Gamma$ is a transitive model, the symmetric closure will break the transitivity. Actually, it can be done in one go by taking the reflexive transitive closure via undirected paths. More precisely, let $\N_\Gamma$ be the model like $\M_\Gamma$ but with the revised relation $R^*_{|x|}$ for each $x\in\Va(W_\Gamma)$, defined as: 

\begin{tabular}{lll}
$\Delta R^*_{|x|} \Theta$  & $\iff$ & either $\Delta=\Theta$ or there are some $\Delta_1 \dots \Delta_n$ for some $n\geq 0$ \\
&&\text{ such that }$\Delta_k R_{|x|} \Delta_{k+1}$  or  $\Delta_{k+1} R_{|x|} \Delta_{k}$  \\ &&\text{for each $0\leq k\leq n$} where $\Delta_0=\Delta$ and $\Delta_{n+1}=\Theta$.
\end{tabular}

We will show that it preserves the truth value of $\ELAS$ formulas. %$z\approx z$ is not in $\Gamma$ for new variables. 
% Let $\U^\Gamma$ be the reflexive closure of $\M_\Gamma,$ we have:
% \begin{proposition}
% For all $\phi\in \ELAS:$
% $$\U^\Gamma, \Delta, \sigma^* \vDash \phi\iff \phi\in \Delta$$
% \end{proposition}

\begin{lemma}[Preservation lemma] \label{lem.pre}
For all $\phi\in \ELAS:$
$$\N_\Gamma, \Delta, \sigma^* \vDash \phi\iff \phi\in \Delta$$
\end{lemma}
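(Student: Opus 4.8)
The plan is to lean on the Truth Lemma (Lemma~\ref{lemtruth}) and exploit the fact that $\N_\Gamma$ and $\M_\Gamma$ share the same worlds $W_\Gamma$, domain $I^c$, valuation $\rho^c$ and naming $\eta^c$, differing \emph{only} in their accessibility relations, where $R^*_{|x|}$ is the reflexive-symmetric-transitive (undirected-path) closure of $R_{|x|}$. Since $\ELAS\subseteq L_\Delta$ for every $\Delta\in W_\Gamma$, each $\phi\in\ELAS$ automatically lies in $L_\Delta$, so the hypothesis of Lemma~\ref{lemtruth} is met for free. I would prove the stated biconditional by induction on $\phi$. The atomic cases ($t\approx t'$, $P\vec t$) and the Boolean cases are verbatim copies of those in Lemma~\ref{lemtruth}, as they never consult the relations; the assignment case $[x:=t]\psi$ is handled exactly as in Lemma~\ref{lemtruth} (via the $\exists$-property, $\AxSUBAS$, $\AxSUBASEQ$ and relettering), since it manipulates only the assignment at the current world and defers all relational content to the induction hypothesis on $\psi$. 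The only case the closure can touch is $\K_t\psi$.

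For the modal case, fix by the $\exists$-property some $x\in\Va(\Delta)$ with $x\approx t\in\Delta$, so that $\sigma^*$ sends $t$ to $|x|$ and, by $\AxSUBK$, $\K_t\psi\in\Delta$ iff $\K_x\psi\in\Delta$; thus $\N_\Gamma,\Delta,\sigma^*\vDash\K_t\psi$ amounts to $\psi$ holding at every $R^*_{|x|}$-successor of $\Delta$. The direction from truth to membership is the easy one: if $\K_t\psi\notin\Delta$ then $\hK_x\neg\psi\in\Delta$, and the Existence Lemma (Lemma~\ref{lem.ex}) yields some $\Theta$ with $\Delta R_{|x|}\Theta$ and $\neg\psi\in\Theta$ (such $\Theta$ is reachable from $\Gamma$ through $\Delta$, hence in the generated set $W_\Gamma$); since $R_{|x|}\subseteq R^*_{|x|}$, the induction hypothesis gives $\N_\Gamma,\Theta\nvDash\psi$ at an $R^*_{|x|}$-successor, so $\N_\Gamma,\Delta\nvDash\K_t\psi$.

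The converse is where the closure genuinely bites, and it rests on an invariance claim: if $\Delta R^*_{|x|}\Theta$ in $W_\Gamma$ then $\K_x\psi\in\Delta$ iff $\K_x\psi\in\Theta$. It suffices to prove this across a single $R_{|x|}$-edge in either orientation and iterate along the undirected path; first note that $x$ stays in the language of every world on the path by Proposition~\ref{prop.obs}, so $\K_x\psi$ is a formula of each intervening language and is decided by each maximal consistent set. Across a forward edge $\Delta R_{|x|}\Theta$, positive introspection $\AxTransK$ turns $\K_x\psi\in\Delta$ into $\K_x\K_x\psi\in\Delta$ and hence $\K_x\psi\in\Theta$; for the reverse, if $\K_x\psi\notin\Delta$ then $\K_x\neg\K_x\psi\in\Delta$ by negative introspection $\AxEucK$, forcing $\neg\K_x\psi\in\Theta$, so $\K_x\psi\in\Theta$ entails $\K_x\psi\in\Delta$. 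Granting the claim, if $\K_t\psi\in\Delta$, i.e.\ $\K_x\psi\in\Delta$, then $\K_x\psi$ belongs to every $R^*_{|x|}$-successor $\Theta$, whence $\psi\in\Theta$ by $\AxTrK$, and the induction hypothesis gives $\N_\Gamma,\Theta\vDash\psi$; as $\Theta$ ranges over all $R^*_{|x|}$-successors, $\N_\Gamma,\Delta\vDash\K_t\psi$.

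The main obstacle is precisely this invariance claim: taking the symmetric closure destroys the transitivity of $R_{|x|}$ (as noted just before the lemma), so one cannot argue frame-theoretically that $\K_x$-truth is stable and must instead push $\K_x\psi$ back and forth across edges using \emph{both} $\AxTransK$ and $\AxEucK$. It is crucial that the index has first been reduced to a genuine \emph{variable} $x$ via $\AxSUBK$: only variable-indexed modalities enjoy positive and negative introspection (names do not, since $a$ need not know it is named $a$), so the entire argument hinges on replacing $\K_t$ by the equivalent $\K_x$ inside the consistent set before invoking the S5 axioms.
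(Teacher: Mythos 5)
Your proof is correct and follows essentially the same route as the paper's: reduce everything to the $\K_t\psi$ case, replace $\K_t$ by $\K_x$ via \AxSUBK{} using the $\exists$-property, and transfer $\K_x\psi$ across each edge of the undirected path using \AxTransK{} for edges traversed forwards and \AxEucK{} (in contrapositive) for edges traversed backwards, finishing with \AxTrK{} and the induction hypothesis; your truth-to-membership direction via the Existence Lemma (Lemma~\ref{lem.ex}) is just the contrapositive of the paper's argument through $\M_\Gamma$ and Lemma~\ref{lemtruth}. One citation slip worth fixing: the claim that $x$ stays in every local language along the path does not follow from Proposition~\ref{prop.obs} alone, since languages can \emph{shrink} against the direction of an edge; as in the paper, for an edge $\Theta R_{|x|}\Delta$ traversed backwards you need condition~1 of the definition of $R_{|x|}$ (the source of any $R_{|x|}$-edge has some representative $y\in|x|$ in its language) together with Proposition~\ref{prop.equiv} (giving $x\approx y\in\Theta$ when $y\neq x$) to conclude $x\in\Va(\Theta)$, which is what licenses the maximality step $\K_x\psi\notin\Theta\Rightarrow\neg\K_x\psi\in\Theta$ in your \AxEucK{} argument.
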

%The proof can be found in Appendix \ref{app.pre}. 
\begin{proof}
Since we only altered the relations, We just need to check $\K_t\psi\in\ELAS$. Note that then $\K_t\psi$ is in all the local language $L_\Delta$.
\begin{itemize}
\item If $\N^\Gamma, \Delta, \sigma^* \vDash \K_t\psi$ then since the closure only adds relations then we know $\M_\Gamma, \Delta, \sigma^* \vDash \K_t\psi$ by induction hypothesis and Lemma \ref{lemtruth}. Now by Lemma \ref{lemtruth} again $\K_t\psi\in \Delta.$ 
\item Suppose $\K_t\psi\in \Delta$. Since $\Delta$ has $\exists$-property, there is some $x\in\Va(\Delta)$ such that $x\approx t\in \Delta$ thus $\K_x\psi\in\Delta$. Now consider an arbitrary $R^*_{|x|}$-successor $\Theta$ in $\N^\Gamma$. If $\Delta=\Theta$ then by $\AxKT$ it is trivial to show that $\psi\in\Delta$. Now by the definition of $R^*_{|x|}$, suppose there are some $\Delta_1 \dots \Delta_n$ \text{ such that }$\Delta_k R_{|x|} \Delta_{k+1}$ or $\Delta_{k+1} R_{|x|} \Delta_{k}$ \text{ for each $0\leq k\leq n$ } where $\Delta=\Delta_0$ and $\Theta=\Delta_{n+1}$. Now we do induction on $n$ to show that $\K_x\psi\in \Delta_{k}$ for all those $k\leq n+1$. Note that if the claim is correct then by $\AxKT$ we have $\psi\in \Delta_{k+1}$ thus by 
IH we have $\N^\Gamma, \Delta, \sigma^* \vDash \K_t\psi$.
\begin{itemize}
\item $n=0:$ Then there are two cases: 
\begin{itemize}
\item $\Delta R_{|x|}\Theta$ in $\M_\Gamma$: by $\AxTransK$, $\K_x\K_x\psi\in \Delta$ and then $\K_x\psi\in\Theta$ by the definition of $R_{|x|}$. 
\item $\Theta R_{|x|} \Delta$ in $\M_\Gamma$: First note that there is some $y\in |x|$ such that $y\in\Va(\Theta)$ by the definition of $R_{|x|}$. If $y\not=x$ then by Proposition \ref{prop.equiv}, we have $y\approx x\in \Theta$, therefore  $x\in\Va(\Theta)$. Towards contradiction suppose $ \neg \K_x\psi\in\Theta$. By $\AxEucK$, $\K_x\neg \K_x\psi \in \Theta$. By definition of $R_{|x|}$, $\neg \K_x\psi\in \Delta$. Contradiction.
\end{itemize}
\item $n=k+1:$ Supposing that the claim holds for $n=k$, i.e., $\K_x\psi\in \Delta_{k}$. There are again two cases: $\Delta_k R_{|x|} \Delta_{k+1}$ or $\Delta_{k+1} R_{|x|} \Delta_k$ and they can be proved as above. 
%\noteYW{Please check it.}
\end{itemize}
\medskip
In sum, $\N^\Gamma, \Theta, \sigma^* \vDash \psi$ for any $\Theta$ such that $\Delta R_{|x|}\Theta$. Therefore, $\N^\Gamma, \Delta, \sigma^* \vDash \K_t \psi$.
\end{itemize}
% \begin{itemize}
% \item[1] $\Theta$ is a $|x|$-successor of $\Delta$ in $\M_\Gamma$, then $\psi\in\Theta$ by the definition of $R_{|x|}$ and the fact that $\K_x\psi\in \Delta$. Therefore by induction hypothesis $\N^\Gamma, \Theta, \sigma^* \vDash \psi$
% \item[2] $\Theta=\Delta$, then since $\K_t\psi\in\Delta$ we have $\psi\in\Theta$ by $\AxTrK$. Thus by induction hypothesis $\N^\Gamma, \Theta, \sigma^* \vDash \psi$.
% \item[3] $\Theta R_{|x|} \Delta$ in $\M_\Gamma$. Towards contradiction suppose  $\N^\Gamma, \Theta, \sigma^* \nvDash \psi$ then by IH $ \neg \psi\in\Theta$. By $\AxTrK$, $\neg \K_x \psi\in \Delta.$ By $\AxEucK$, $\K_x\neg \K_x\psi \in \Delta$. Since $\Theta R_{|x|} \Delta$ in $\M_\Gamma$, $\neg \K_x\psi\in \Delta$. Contradiction.
% \end{itemize}
% In sum, $\N^\Gamma, \Theta, \sigma^* \vDash \psi$ for any $\Theta$ such that $\Delta R_{|x|}\Theta$. Therefore, $\N^\Gamma, \Delta, \sigma^* \vDash \K_t \psi$.
% \end{itemize}
%\noteYW{Need to double check.}
\end{proof}

\medskip
 
It can be easily checked that:  
\begin{lemma}
$\N^\Gamma$ is an epistemic model, i.e., all the $R_{|x|}$ are equivalence relations. 
\end{lemma}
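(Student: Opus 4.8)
The plan is to observe that $R^*_{|x|}$ was \emph{defined} precisely as the reflexive--symmetric--transitive closure of $R_{|x|}$, built directly via undirected paths, so that each defining property of an equivalence relation can essentially be read off the definition. It therefore suffices to verify reflexivity, symmetry and transitivity of $R^*_{|x|}$ on $W_\Gamma$ for each fixed $x\in\Va(W_\Gamma)$, noting that the closures for distinct agents $|x|$ are formed independently and so cannot interfere with one another. Well-definedness with respect to the choice of representative of $|x|$ is inherited from the corresponding fact already established for $R_{|x|}$ when showing the canonical model is well-defined.

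First I would dispatch reflexivity: for any $\Delta\in W_\Gamma$, the clause ``$\Delta=\Theta$'' in the definition of $R^*_{|x|}$ immediately yields $\Delta R^*_{|x|}\Delta$. (This is exactly the point of the closure, since the original $R_{|x|}$ had no loop at those $\Delta$ with $x\notin\Va(\Delta)$.) For symmetry, suppose $\Delta R^*_{|x|}\Theta$ is witnessed by a chain $\Delta_0,\dots,\Delta_{n+1}$ with $\Delta_0=\Delta$, $\Delta_{n+1}=\Theta$, where at each step $\Delta_k R_{|x|}\Delta_{k+1}$ or $\Delta_{k+1}R_{|x|}\Delta_k$. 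Reversing the chain to $\Delta_{n+1},\dots,\Delta_0$ gives a witness for $\Theta R^*_{|x|}\Delta$, because the disjunctive step condition is insensitive to direction; the base case $\Delta=\Theta$ is trivially symmetric.

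For transitivity, suppose $\Delta R^*_{|x|}\Theta$ and $\Theta R^*_{|x|}\Lambda$. If either holds by an equality clause the conclusion is immediate, so assume both are witnessed by undirected chains; concatenating the chain from $\Delta$ to $\Theta$ with the one from $\Theta$ to $\Lambda$ produces a single undirected chain from $\Delta$ to $\Lambda$, witnessing $\Delta R^*_{|x|}\Lambda$. This concatenation is the only step that calls for any verification, and it is entirely routine. Since $R^*_{|x|}$ is thus reflexive, symmetric and transitive for every $|x|\in I^c$, the model $\N_\Gamma$ is an epistemic model. I do not expect a genuine obstacle here: the substantive work—transitivity of $R_{|x|}$ in Proposition~\ref{prop.trans} together with the passage to undirected-path closure—has already been carried out, so this lemma is purely a matter of definitional bookkeeping, which is why the paper flags it as easily checked.
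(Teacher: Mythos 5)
Your proof is correct and is essentially the paper's own argument: the paper leaves this lemma as ``easily checked,'' and the intended check is exactly your direct verification that the undirected-path closure is reflexive (via the $\Delta=\Theta$ clause), symmetric (reverse the chain, since each link condition is direction-insensitive), and transitive (concatenate chains), with well-definedness over representatives of $|x|$ inherited from $R_{|x|}$. One small correction to your framing: the transitivity of $R_{|x|}$ from Proposition~\ref{prop.trans} plays no role here---the closure of an \emph{arbitrary} relation under these clauses is already an equivalence relation---that proposition instead does its work in the Preservation Lemma~\ref{lem.pre}, where membership of $\K_x\psi$ must be pushed along the chain.
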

The following is straightforward by using some new variables but leaving infinitely many new variables still unused. 
\begin{lemma}
Each \SELAS-consistent set $\Gamma^{--}$ can be extended to a consistent set $\Gamma^-$ w.r.t.\ some infinitely proper sublanguage $L$ of $\ELAS^+$ such that for each $a\in \Nm$ there is an $x\in \Va(L)$ such that $x\approx a\in \Gamma^-$. Finally we can extend it to an MCS $\Gamma$ w.r.t.\ $L$.
\end{lemma}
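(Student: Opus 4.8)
The plan is to run a Henkin-style witness construction, adding a fresh variable witness for each name while deliberately reserving infinitely many of the new variables of $\ELAS^+$, so that the local language we end up with is still an infinitely proper sublanguage. The technical heart is a \emph{fresh-witness consistency} fact: if $S$ is a consistent set of $\ELAS^+$-formulas and $z$ is a variable occurring nowhere in $S$, then $S\cup\{z\approx a\}$ is consistent for every $a\in\Nm$. This is exactly the argument already used inside the Existence Lemma (Lemma~\ref{lem.ex}): were $S\cup\{z\approx a\}$ inconsistent, there would be $\psi_1,\dots,\psi_n\in S$ with $\vdash\bigwedge_i\psi_i\to z\not\approx a$; since $z\notin\Fv(\bigwedge_i\psi_i)$, the rule $\NECAS$ applies and yields $\vdash\bigwedge_i\psi_i\to[z:=a]\,z\not\approx a$, and then $\AxEFAS$ together with $\AxDETAS$ and $\AxEXEAS$ force $\bigwedge_i\psi_i$ to be inconsistent, contradicting the consistency of $S$.

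First I would fix the bookkeeping of variables, which is really the only delicate point. Since $\ELAS^+$ adds countably infinitely many new variables to $\ELAS$, I would split $\Var^+\setminus\Var$ into two disjoint countably infinite sets: a witness pool $V_1=\{z_0,z_1,\dots\}$ and a reserve $V_2$. Enumerating the names as $a_0,a_1,\dots$, I assign to $a_k$ the distinct witness $z_k\in V_1$. Interleaving in this way is essential: naively drawing one fresh variable per name from the whole new supply could exhaust a countable reservoir, whereas peeling off $V_2$ in advance guarantees that infinitely many new variables remain unused no matter how many names get witnessed.

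Next I would build the witnessed set by recursion. Recall that $\Gamma^{--}$ is a set of $\ELAS$-formulas, so all its variables lie in $\Var$. Put $\Gamma^-_0=\Gamma^{--}$ and $\Gamma^-_{k+1}=\Gamma^-_k\cup\{z_k\approx a_k\}$. Because $\Gamma^{--}$ uses only variables from $\Var$ and the $z_j$ are pairwise distinct members of $V_1$, the variable $z_k$ occurs nowhere in $\Gamma^-_k$; hence the fresh-witness fact gives consistency of each $\Gamma^-_{k+1}$ by induction, the base $\Gamma^-_0$ being consistent by hypothesis. Setting $\Gamma^-=\bigcup_k\Gamma^-_k$, finitariness of derivability means any inconsistency of $\Gamma^-$ would already appear in some $\Gamma^-_k$, so $\Gamma^-$ is consistent, and by construction it contains a witness $z_k\approx a_k$ for every name $a_k$.

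Finally I would let $L$ be the language whose variable set is $\Var\cup V_1$. Then $\ELAS\subseteq L$, $L$ differs from $\ELAS^+$ only in its variables with $L\subseteq\ELAS^+$, and the reserve $V_2$ witnesses that infinitely many new variables of $\ELAS^+$ lie outside $L$; thus $L$ is an infinitely proper sublanguage of $\ELAS^+$. All of $\Gamma^-$ lies in $L$, and each witness $z_k$ belongs to $V_1\subseteq\Va(L)$, so $\Gamma^-$ is a consistent $L$-set with the $\exists$-property. A standard Lindenbaum argument then extends $\Gamma^-$ to an MCS $\Gamma$ \emph{with respect to} $L$; this preserves every witness and keeps the local language equal to $L$, so $\Gamma\in W$. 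I expect the main obstacle to be precisely this variable bookkeeping: keeping the witness assignment total over the countably many names while still stranding infinitely many new variables outside $L$, and being careful that the Lindenbaum step is taken relative to $L$ rather than $\ELAS^+$, since it is exactly this that makes the resulting MCS a legitimate element of the pseudo-canonical frame.
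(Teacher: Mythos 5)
Your proof is correct and follows essentially the same route the paper intends: the paper disposes of this lemma with the one-line remark that it is ``straightforward by using some new variables but leaving infinitely many new variables still unused,'' which is exactly your split of $\Var^+\setminus\Var$ into a witness pool and an infinite reserve, and your fresh-witness consistency step via $\NECAS$, $\AxEFAS$, $\AxDETAS$ and $\AxEXEAS$ is precisely the argument the paper itself uses inside the proof of the Existence Lemma. You have simply made explicit the bookkeeping the paper leaves implicit, so there is nothing to object to.
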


\begin{theorem}
$\SELAS$ is sound and strongly complete over epistemic Kripke models with assignments. 
\end{theorem}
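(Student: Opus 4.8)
The plan is to assemble the soundness and completeness halves from the machinery already developed. Soundness is Theorem~\ref{th.soundness}, so the real work is strong completeness: every \SELAS-consistent set $\Gamma^{--}$ of \ELAS-formulas is satisfiable on some epistemic model with an assignment. First I would invoke the final extension lemma to turn $\Gamma^{--}$ into an MCS $\Gamma$ with the $\exists$-property, living in an infinitely proper sublanguage $L$ of $\ELAS^+$ that still leaves infinitely many fresh variables unused; this places $\Gamma$ into the pseudo canonical frame $\F^c$ as a legitimate world.

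Next I would form the generated submodel $\M_\Gamma$ from $\Gamma$ and the canonical assignment $\sigma^*$ with $\sigma^*(x)=|x|$. By the Truth Lemma (Lemma~\ref{lemtruth}), for every $\phi\in L_\Delta$ and every $\Delta\in W_\Gamma$ we have $\M_\Gamma,\Delta,\sigma^*\vDash\phi$ iff $\phi\in\Delta$; in particular every formula of $\Gamma^{--}$ is true at $\Gamma$. However, $\M_\Gamma$ is only transitive, not yet an epistemic model. So I would pass to the closure model $\N_\Gamma$ obtained by taking the reflexive--symmetric--transitive closure $R^*_{|x|}$ of each relation, and apply the Preservation Lemma (Lemma~\ref{lem.pre}): for all $\phi\in\ELAS$, $\N_\Gamma,\Delta,\sigma^*\vDash\phi$ iff $\phi\in\Delta$. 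Since the formulas of $\Gamma^{--}$ are genuine \ELAS-formulas, they remain true at $\Gamma$ in $\N_\Gamma$.

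Finally I would note that $\N_\Gamma$ is an epistemic model, i.e.\ every $R^*_{|x|}$ is an equivalence relation, by the preceding lemma. Hence $\N_\Gamma,\Gamma,\sigma^*$ is an epistemic model with assignment satisfying all of $\Gamma^{--}$, which establishes strong completeness; combined with soundness this gives the theorem.

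The only genuinely delicate point in stitching these pieces together is the restriction to the original language \ELAS\ in the last two steps. The Truth Lemma holds for the full extended language $\ELAS^+$, but the Preservation Lemma is stated only for $\phi\in\ELAS$, because the closure operation can disturb the finer structure (non-reflexive, non-symmetric) that the assignment-sensitive clauses relied upon; the key observation that rescues things is that any \ELAS-formula $\K_t\psi$ lies in every local language $L_\Delta$, so the induction in Lemma~\ref{lem.pre} never leaves the languages where the relevant formulas are defined. I therefore expect the main obstacle---already handled in the lemmas I am invoking---to be precisely this interplay between the global constant domain (built from equivalence classes of witnesses across the whole pseudo frame) and the local languages $L_\Delta$; the completeness theorem itself is then a short corollary of the lemmas chained in the order above.
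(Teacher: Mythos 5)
Your proposal is correct and follows essentially the same route as the paper: extend the consistent set to an MCS $\Gamma$ with the $\exists$-property, apply the Truth Lemma (Lemma~\ref{lemtruth}) to $\M_\Gamma$, pass to the closure model $\N_\Gamma$ via the Preservation Lemma (Lemma~\ref{lem.pre}), and conclude with the lemma that $\N_\Gamma$ is an epistemic model. If anything, your write-up is more explicit than the paper's own two-line proof, which cites only the Truth Lemma and leaves the necessary detour through $\N_\Gamma$ (needed because $\M_\Gamma$ itself is neither reflexive nor symmetric) implicit.
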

\begin{proof}
Soundness is from Theorem~\ref{th.soundness}. Then given a consistent set $\Gamma^-$, using the above proposition we have a $\Gamma$. By the Truth Lemma \ref{lemtruth} we have a model satisfying $\Gamma$ and hence $\Gamma^-$. 
\end{proof}
From the above proof, it is not hard to see that we can obtain the completeness of \SELAS\ without $\AxTrK, \AxTransK, \AxEucK$ over arbitrary models by some minor modifications of the proof.

\section{Discussions and future work}  
\label{sec.con}
In this paper, we proposed a lightweight epistemic language with assignment operators from dynamic logic, which can express various \textit{de re}\slash \textit{de dicto} readings of knowledge statements when the references of the names are not commonly known.  We gave a complete axiomatisation of the logic over epistemic models with constant domain of agents.

The complexity of the epistemic logic \SELAS\ is currently unknown to us though we conjecture it is decidable due to the very limited use of quantifiers. Under the translation in Section \ref{sec.pre}, the name-free fragment can be viewed as a guarded fragment of first-order logic with transtive guards \cite{Szwast2001}, which implies decidability. However, the non-rigid names translate into function symbols in first-order language, which may cause troubles since the guarded fragment with function symbols in general yields an undecidable logic \cite{Gradel98onthe}. We are not that far from the decidability boundary, if not on the wrong side. 

To actually design a tableaux method in pursuing the decidability of our logic, we have to handle the difficulties from various sources: 
\begin{itemize}
\item S5 frame conditions
\item equalities
\item constant domain
\item non-rigid names
\item termed modalities
\item assignment operators
\end{itemize}
Some of the issues are already complicated on their own based on the knowledge of existing work. The biggest hurdle for the termination in a tableau method for S5-based logic like the ones proposed in \cite{Fitting98,Massacci94}, is to ensure loops in finite steps. This requires us in our setting to show that given a satisfiable formula, we can bound the number of necessary elements in the domain (for non-rigid names) and the number of subformulas we may encounter when building the tableau. The S5 condition and the assignment operator may ask us to always introduce new elements in the domain when creating new successors, while the new elements can essentially create new subformulas, if we add new symbols for them in the tableaux.   
%Formulas in the shape of $\K_t([x:=c]\land \hK_{t'}[y := d]\phi (x, y))$ can be hard to handle under the requirement of S5, since you may need to create various new elements for $c$ and $d$ on new  successors. 
 On the other hand, without the transitivity and symmetry conditions, it is possible to bound the number of new elements in the domain to obtain decidability via some finite model property. We leave the details to a future occasion as well as the exploration of other ideas for decidability such as filtering the canonical model. 

\medskip

Below we list a few other further directions:
\begin{itemize}
\item Model theoretical issues of \ELAS.
\item Extension with function symbols.  
\item Extension with a (termed) common knowledge operator. 
\item Extension with limited quantifications over agents as in \cite{naumov2018everyone}. 
\item Extension to varying domain models, where the existence of all the agents is not commonly known. 
\end{itemize}
Finally, as a general direction, it would be interesting to consider what happens if we replace the standard epistemic logic with our $\ELAS$ in various existing logical framework extending the standard one. 
\paragraph{Acknowledgement}The authors thank Johan van Benthem,  Rasmus Rendsvig, and Dominik Klein for pointers on related work. The authors are also grateful to the anonymous reviewers of AiML, whose comments helped in improving the presentation of the paper.\footnote{Including the suggestion to change the previous title `Call me by your name' of the paper.}
The research for this work was supported by the New Zealand Centre at Peking University and Major Program of the National Social Science Foundation of China (NO. 17ZDA026).  
\vspace{-10pt}


\begin{thebibliography}{10}
\expandafter\ifx\csname url\endcsname\relax
  \def\url#1{\texttt{#1}}\fi
\expandafter\ifx\csname urlprefix\endcsname\relax\def\urlprefix{URL }\fi
\newcommand{\enquote}[1]{``#1''}

\bibitem{Aloni01}
Aloni, M., \enquote{Quantification under Conceptual Covers,} Ph.D. thesis,
  University of Amsterdam (2001).

\bibitem{Aloni2018}
Aloni, M., \emph{Knowing-who in quantified epistemic logic}, in: H.~van
  Ditmarsch and G.~Sandu, editors, \emph{Jaakko Hintikka on Knowledge and
  Game-Theoretical Semantics}, Springer, 2018 pp. 109--129.

\bibitem{Corsi02}
Corsi, G., \emph{A unified completeness theorem for quantified modal logics},
  Journal of Symbolic Logic \textbf{67} (2002), pp.~1483--1510.

\bibitem{corsi2013free}
Corsi, G. and E.~Orlandelli, \emph{Free quantified epistemic logics}, Studia
  Logica \textbf{101} (2013), pp.~1159--1183.

\bibitem{Fitting98}
Fitting, M. and R.~L. Mendelsohn, \enquote{First-order modal logic,} Synthese
  Library, Springer, 1998.

\bibitem{fitting2001term}
Fitting, M., L.~Thalmann and A.~Voronkov, \emph{Term-modal logics}, Studia
  Logica \textbf{69} (2001), pp.~133--169.

\bibitem{grove1995naming}
Grove, A.~J., \emph{Naming and identity in epistemic logic part {II}: a
  first-order logic for naming}, Artificial Intelligence \textbf{74} (1995),
  pp.~311--350.

\bibitem{grove1993naming}
Grove, A.~J. and J.~Y. Halpern, \emph{Naming and identity in epistemic logics
  part {I}: the propositional case}, Journal of Logic and Computation
  \textbf{3} (1993), pp.~345--378.

\bibitem{Gradel98onthe}
Grädel, E., \emph{On the restraining power of guards}, Journal of Symbolic
  Logic \textbf{64} (1998), pp.~1719--1742.

\bibitem{Hintikka:kab}
Hintikka, J., \enquote{Knowledge and Belief: An Introduction to the Logic of
  the Two Notions,} Cornell University Press, Ithaca N.Y., 1962.

\bibitem{HollidayP2014}
Holliday, W.~H. and J.~Perry, \emph{Roles, rigidity, and quantification in
  epistemic logic}, in: A.~Baltag and S.~Smets, editors, \emph{Johan van
  Benthem on Logic and Information Dynamics}, Springer, 2014 pp. 591--629.

\bibitem{Cresswell96}
Hughes, G.~E. and M.~J. Cresswell, \enquote{A New Introduction to Modal Logic,}
  Routledge, 1996.

\bibitem{kooi2007dynamic}
Kooi, B., \emph{Dynamic term-modal logic}, in: \emph{Proceedings of LORI-I},
  2007, pp. 173--186.

\bibitem{Massacci94}
Massacci, F., \emph{Strongly analytic tableaux for normal modal logics}, in:
  A.~Bundy, editor, \emph{Automated Deduction --- CADE-12} (1994), pp.
  723--737.

\bibitem{naumov2018everyone}
Naumov, P. and J.~Tao, \emph{Everyone knows that someone knows: Quantifiers
  over epistemic agents}, The Review of Symbolic Logic  (2018).

\bibitem{Padmanabha2018}
Padmanabha, A. and R.~Ramanujam, \emph{The monodic fragment of propositional
  term modal logic}, Studia Logica  (2018).

\bibitem{Rendsvigmaster}
Rendsvig, R., \enquote{Towards a Theory of Semantic Competence,} Master's
  thesis, Roskilde University (2011).

\bibitem{rendsvig2012}
Rendsvig, R., \emph{Modeling semantic competence: A critical review of
  frege’s puzzle about identity}, in: L.~D. and S.~M., editors, \emph{New
  Directions in Logic, Language and Computation, ESSLLI 2010, ESSLLI 2011},
  LNCS  \textbf{7415} (2012), pp. 140--157.

\bibitem{Szwast2001}
Szwast, W. and L.~Tendera, \emph{On the decision problem for the guarded
  fragment with transitivity}, in: \emph{Proceedings of the 16th Annual IEEE
  Symposium on Logic in Computer Science}, LICS '01 (2001), pp. 147--156.

\bibitem{thalmann2000term}
Thalmann, L., \enquote{Term-modal logic and quantifier-free dynamic assignment
  logic,} Ph.D. thesis, Uppsala University (2000).

\bibitem{Wang17d}
Wang, Y., \emph{A new modal framework for epistemic logic}, in:
  \emph{Proceedings Sixteenth Conference on Theoretical Aspects of Rationality
  and Knowledge, {TARK} 2017, Liverpool, UK, 24-26 July 2017.}, 2017, pp.
  515--534.

\bibitem{WangBKT}
Wang, Y., \emph{Beyond knowing that: A new generation of epistemic logics}, in:
  H.~van Ditmarsch and G.~Sandu, editors, \emph{Jaakko Hintikka on Knowledge
  and Game-Theoretical Semantics}, Springer, 2018 pp. 499--533.

\end{thebibliography}
\end{document}